\def\eqref#1{equation~\ref{#1}}
\def\ceil#1{\lceil #1 \rceil}
\def\1{\bm{1}}
\def\vd{{\bm{d}}}
\def\vg{{\bm{g}}}
\def\vo{{\bm{o}}}
\def\vv{{\bm{v}}}
\def\mI{{\bm{I}}}
\DeclareMathAlphabet{\mathsfit}{\encodingdefault}{\sfdefault}{m}{sl}
\SetMathAlphabet{\mathsfit}{bold}{\encodingdefault}{\sfdefault}{bx}{n}
\def\sA{{\mathbb{A}}}
\def\sC{{\mathbb{C}}}
\def\sD{{\mathbb{D}}}
\def\sF{{\mathbb{F}}}
\def\sM{{\mathbb{M}}}
\def\sS{{\mathbb{S}}}
\def\sT{{\mathbb{T}}}
\DeclareMathOperator*{\argmin}{arg\,min}
\let\classAND\AND
\let\AND\relax
\let\AND\classAND
\newenvironment{hproof}{%
  \proof}{\endproof}
\newcolumntype{P}[1]{>{\centering\arraybackslash}p{#1}}
\newcommand{\norm}[1]{\left\lVert#1\right\rVert}
\newtheorem{theorem}{Theorem}[section]
\newtheorem{lemma}[theorem]{Lemma}
\newtheorem{remark}{Remark}
\newtheorem{definition}{Definition}
\title{
Individual Privacy Accounting for \\ Differentially Private Stochastic Gradient Descent
}
\def\@fnsymbol#1{\ensuremath{\ifcase#1\or \dagger\or \text{*} \or \ddagger\or
   \mathsection\or  \mathparagraph\or \|\or **\or \dagger\dagger
   \or \ddagger\ddagger \else\@ctrerr\fi}}
\author{\name Da Yu \email yuda3@mail2.sysu.edu.cn \\
 \addr Sun Yat-sen University 
 \AND
 \name Gautam Kamath\thanks{Supported by an NSERC Discovery Grant, an unrestricted gift from Google, and a University of Waterloo startup grant.} \textsuperscript{\space*} \email g@csail.mit.edu \\
 \addr Cheriton School of Computer Science \\
 University of Waterloo
 \AND
 \name Janardhan Kulkarni\textsuperscript{\space*} \email jakul@microsoft.com \\
 \addr Microsoft Research
 \AND
 \name Tie-Yan Liu\textsuperscript{\space*} \email tyliu@microsoft.com \\
 \addr Microsoft Research
 \AND
 \name Jian Yin\textsuperscript{\space*} \email issjyin@mail.sysu.edu.cn \\
 \addr Sun Yat-sen University
 \AND
 \name Huishuai Zhang\textsuperscript{\space}\thanks{Authors are listed in alphabetical order.} \email huzhang@microsoft.com \\
 \addr Microsoft Research
}
\begin{document}

\maketitle

\begin{abstract}
Differentially private stochastic gradient descent (DP-SGD) is the workhorse algorithm for recent advances in private deep learning. 
It provides a single privacy guarantee to all datapoints in the dataset.
We propose \emph{output-specific} $(\varepsilon,\delta)$-DP to characterize privacy guarantees for individual examples when releasing models trained by DP-SGD.
We also design an efficient algorithm to investigate individual privacy  across a number of datasets.
We find that most examples enjoy stronger privacy guarantees than the worst-case bound. 
We further discover that the training loss and the privacy parameter of an example are well-correlated.
This implies groups that are underserved in terms of model utility simultaneously experience weaker privacy guarantees. 
For example, on CIFAR-10, the average $\varepsilon$ of the class with the lowest test accuracy is 44.2\% higher than that of the class with the highest accuracy. Our code is available at \url{https://github.com/dayu11/individual_privacy_of_DPSGD}.
\end{abstract}

\section{Introduction}
\label{sec:intro}

Differential privacy is a strong notion of data privacy, enabling rich forms of privacy-preserving data analysis \citep{DworkMNS06,DworkR14}.
Informally speaking, it quantitatively bounds the maximum influence of any datapoint using a privacy parameter $\varepsilon$, where smaller values of $\varepsilon$ correspond to stronger privacy guarantees. Training deep models with differential privacy is an active research area \citep{PapernotAEGT17,ZhuYCW20,AnilGGKM21,YuNBGIKKLMWYZ22,LiTLH22,GolatkarAWRKS22,MehtaTKC22,DeBHSB22,bu2022scalable,MehtaKTKC22}. Models trained with differential privacy not only provide theoretical privacy guarantees to their data owners but also are more robust against empirical attacks \citep{RahmanRLMW18,BernauGRK19,CarliniLEKS19,JagielskiUO20,NasrSTPC21}.

Differentially private stochastic gradient descent (DP-SGD) is the most popular algorithm for differentially private deep learning \citep{SongCS13,BassilyST14,AbadiCGMMTZ16}. 
At each step, DP-SGD takes the model from the previous step and the dataset as inputs. It adds isotropic Gaussian noise to the average gradient of the current step.  Models trained with DP-SGD satisfy $(\varepsilon,\delta)$-differential privacy.
The canonical notion of differential privacy, including $(\varepsilon,\delta)$-DP,  considers the {\em worst-case} privacy over all possible inputs.
In the case of DP-SGD, this results in the privacy cost of all examples being computed with the largest possible magnitude of individual gradients, i.e., the gradient clipping threshold.

In practice, we may care more about the privacy guarantees of the models that will be deployed, which depend on the observed training trajectories. 
Broadly speaking, different examples may have very different impacts on a trained model \citep{FeldmanZ20, JiangZTM21}. 
Some examples may be easier to learn and hence the magnitudes of their individual gradients along the observed training trajectory could be much smaller than the clipping threshold.
Such a fine-grained privacy guarantee can not be inferred by the canonical $(\varepsilon,\delta)$-DP because it requires the privacy guarantee to hold for all possible datasets and training trajectories.
In this paper, we define \emph{output-specific} $(\varepsilon,\delta)$-DP, which adapts to the training trajectory of the model to analyze the individual privacy of DP-SGD. 
Our definition captures the impact of various factors, such as the training set and algorithmic randomness, on individual privacy.
We also develop algorithms to efficiently and accurately estimate individual privacy.  

It turns out that, unsurprisingly, for common benchmarks, many examples experience much stronger privacy guarantees than implied by the worst-case DP analysis. 
To illustrate this, we plot the individual privacy parameters of four benchmark datasets in Figure~\ref{fig:eps_teaser}. 
Experimental details, as well as more results, are in Sections~\ref{sec:exp} and~\ref{sec:fairness}.
To the best of our knowledge, this paper is the first to reveal the disparity in individual privacy when running DP-SGD.

 \begin{figure*}
    \centering
  \includegraphics[width=1.0\linewidth]{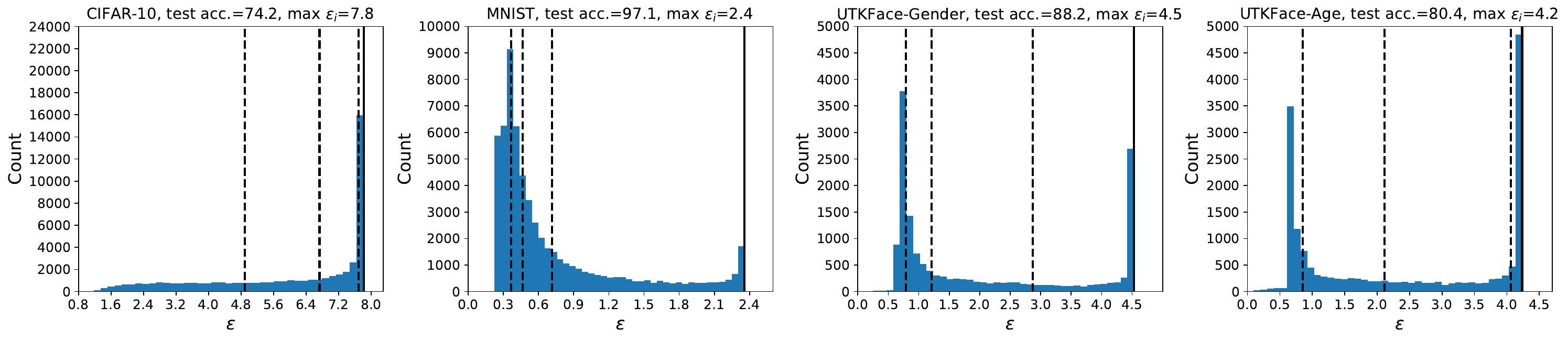}
  \caption{Individual privacy parameters of models trained by DP-SGD. The value of $\delta$ is $1\times 10^{-5}$. The dashed lines indicate $30\%$, $50\%$, and $70\%$ of datapoints. The black solid line shows the worst-case privacy parameter. }
  \label{fig:eps_teaser}
\end{figure*}

Further,  we demonstrate a strong correlation between the privacy parameter of an example and its final training loss.
That is, the examples with higher training loss also have higher privacy parameters in general.
This suggests that the examples that suffer unfairness in terms of worse privacy are also the ones that have worse utility.
See Figure~\ref{fig:eps_loss_corr_class_cifar} for an illustration. 
While prior works have shown that underrepresented groups experience worse utility~\citep{BuolamwiniG18}, and that these disparities are amplified when models are trained privately~\citep{BagdasaryanPS19,hansen2022impact,noe2022exploring,LowyGR22}, we are the first to show that the privacy guarantee \emph{and} utility are negatively impacted concurrently. 
In contrast,  prior work that takes a worst-case perspective for privacy accounting, results in a uniform privacy guarantee for all training examples. 
For instance, when running gender classification on  UTKFace, the average $\varepsilon$ of the race with the lowest test accuracy is 35.1\% higher than that of the race with the highest accuracy.

\label{sec:pre}

 \begin{figure}
    \centering
  \includegraphics[width=0.75\linewidth]{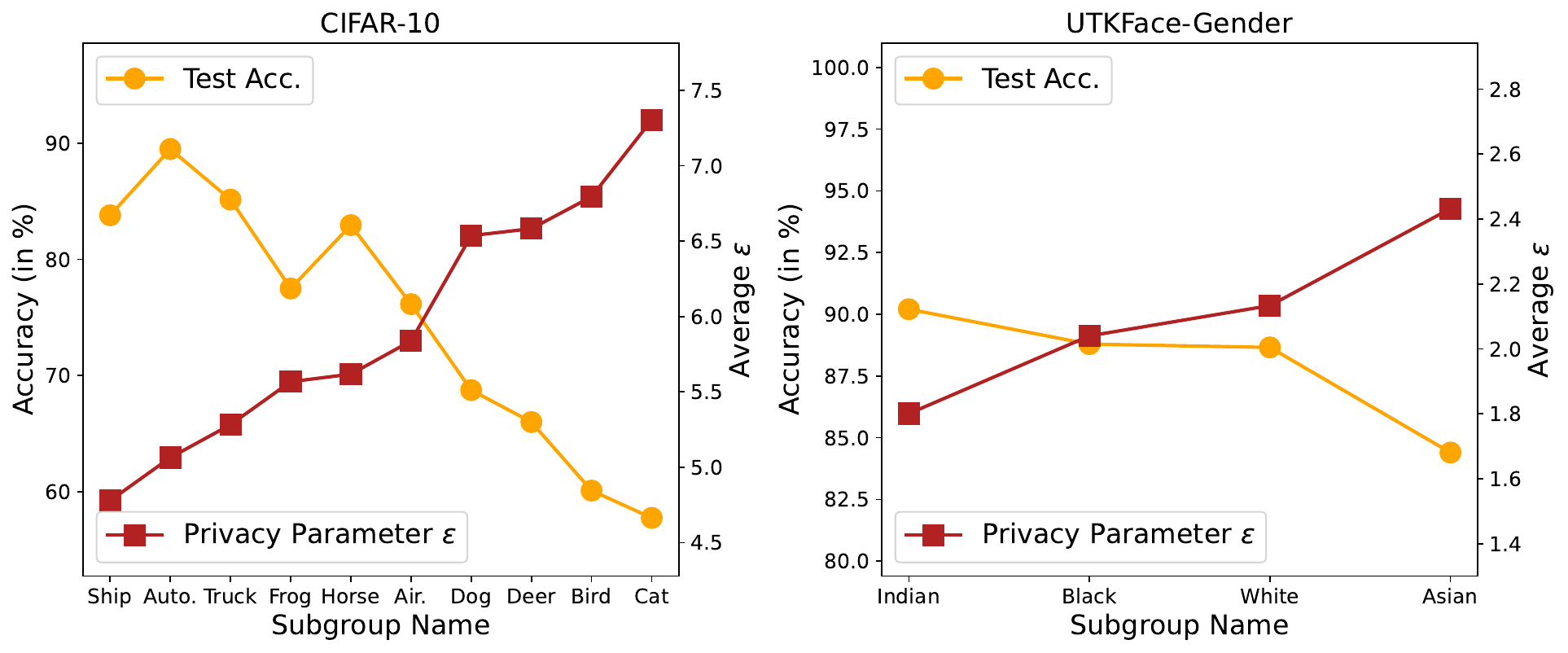}
  \caption{Accuracy and average $\varepsilon$ of different groups on CIFAR-10 and UTK-Face. Groups with worse accuracy also  have worse privacy in general. }
  \label{fig:eps_loss_corr_class_cifar}
\end{figure}

\subsection{Related Work}
\label{sec:related}

Several works have explored individual privacy analysis in differentially private learning. \citet{jorgensen2015conservative,MuhlB22}, and the work subsequent to ours of  \citet{KoskelaTH22,BoenischMDRP23}, design learning algorithms that satisfy prespecified individual privacy parameters. Those prespecified parameters are independent of the learning algorithm, e.g., in some applications different users may have different expectations of privacy. \citet{Wang19} defines \emph{Per-instance differential privacy} to analyze individual privacy when the target example is put in a fixed dataset.
\citet{RedbergW21} investigate per-instance DP of the objective perturbation algorithm \citep{KiferST12}.
\citet{GolatkarAWRKS22} explore  per-instance DP of differentially private batch gradient descent.   
\citet{RedbergW21,GolatkarAWRKS22} focus on (strongly) convex objective functions because otherwise fixing the dataset is not sufficient to determine the individual privacy parameters.
In this work, we define output-specific $(\varepsilon, \delta)$-DP that allows us to study the individual privacy of non-convex models trained by DP-SGD. We also conduct experiments on several datasets and demonstrate a strong correlation between privacy and utility.

 \citet{FeldmanZ21} design individual \emph{privacy filters} to make use of the variation in individual sensitivity. The filters allow examples with smaller per-step privacy costs to run for more steps until the accumulated cost reaches a target budget. Additionally, \citet{FeldmanZ21} study the individual privacy of DP-GD and demonstrate that examples often experience stronger privacy than worst-case analysis suggests. However, computing individual privacy requires calculating gradient norms for all training examples at every step.  In this work, we present an efficient algorithm for estimating the individual privacy of DP-SGD. Our algorithm accurately estimates individual privacy while only occasionally computing the gradient norms of all examples.

Our  output-specific $(\varepsilon, \delta)$-DP and the notion of \emph{ex-post} DP by \citet{ligett2017accuracy}  both tailor the privacy guarantee to algorithm outcomes. Ex-post DP bounds the ratio between two probability/density functions at a single outcome. It can be generalized to pure differential privacy ($(\varepsilon,0)$-DP). In contrast, DP-SGD uses Gaussian mechanisms and provides approximate differential privacy ($(\varepsilon, \delta)$-DP). There is no clean conversion between $(\varepsilon, \delta)$-DP and ex-post DP \citep{Meiser18}. Therefore, our notion is necessary for analyzing individual privacy within the $(\varepsilon, \delta)$-DP framework.

\section{Preliminaries}

We first give some background on DP-SGD and explain why the canonical $(\varepsilon, \delta)$-DP is not suitable for  measuring individual privacy. Then we define output-specific $(\varepsilon, \delta)$-DP. Finally, we give empirical evidence showing that providing the same privacy to all examples is not ideal.

\subsection{Background on Differentially Private SGD}
\label{subsec:background}

The privacy guarantee of DP-SGD is measured by $(\varepsilon,\delta)$-differential privacy.
\begin{definition}
\label{def:definition_dp}[$(\varepsilon,\delta)$-DP]
An algorithm $\mathcal{A}:\mathcal{D}\rightarrow \mathcal{O}$  satisfies $(\varepsilon,\delta)$-differential privacy if for any pair of neighboring datasets $\sD, \sD'\in \mathcal{D}$ and any subset of outputs $\sS\subset \mathcal{O}$ it holds that 
\[\Pr[\mathcal{A}(\sD)\in \sS]\leq e^{\varepsilon}\Pr[\mathcal{A}(\sD')\in \sS]+\delta.\] 
\end{definition}

Two datasets $\sD, \sD'$ are neighboring datasets if they only differ in one datapoint.  DP-SGD uses Rényi differential privacy (RDP) \citep{Mironov17} in privacy accounting to get a tighter composition bound \citep{AbadiCGMMTZ16}. After training, the accumulated RDP is converted to $(\varepsilon,\delta)$-DP.
RDP measures the Rényi divergence at different orders. The Rényi divergence between two probability distributions $\mu$ and $\nu$ at order $\alpha$ is 
\[D_{\alpha}(\mu||\nu)=\frac{1}{\alpha-1} \log\int (\frac{d\mu}{d\nu})^{\alpha}d\nu.\]
Let $D^{\leftrightarrow}_{\alpha}(\mu||\nu)=\max (D_{\alpha}(\mu||\nu),D_{\alpha}(\nu||\mu))$ be the maximum divergence of two directions.  The definition of RDP  is as follows.
\begin{definition}
\label{def:rdp}[Rényi differential privacy \citep{Mironov17}]
A randomized algorithm $\mathcal{A}: \mathcal{D}\rightarrow \mathcal{O}$  satisfies $(\alpha,\rho)$-RDP if for any neighboring datasets $\sD, \sD'\in \mathcal{D}$ it holds that
\[D^{\leftrightarrow}_{\alpha}(\mathcal{A}(\sD) || \mathcal{A}(\sD'))\leq \rho.\]
\end{definition}
When $\mathcal{A}$ is a deep learning algorithm, it is infeasible to directly measure the output distributions because of the non-convex nature of neural networks. To address this, DP-SGD makes each gradient update differentially private and uses the composition property of differential privacy  to reason about the overall privacy cost.
\begin{definition}
\label{def:rdp_composition}[Composition of RDP \citep{Mironov17}]
Let $\mathcal{A}_{1}:\mathcal{D}\rightarrow \mathcal{O}_{1}$ be $(\alpha,\rho_{1})$-RDP and $\mathcal{A}_{2}: \mathcal{O}_{1}\times \mathcal{D}\rightarrow \mathcal{O}_{2}$ be $(\alpha,\rho_{2})$-RDP, then the mechanism defined as $(X,Y)$, where $X\sim \mathcal{A}_{1}(\sD)$ and $Y\sim \mathcal{A}_{2}(X,\sD)$, satisfies $(\alpha,\rho_{1}+\rho_{2})$-RDP.
\end{definition}

The output of the composed algorithm is a tuple containing the outputs from all steps. Consequently, the output of DP-SGD at step $T$ is a sequence of models $(\theta_{1},\theta_{2},\ldots,\theta_{T})$. 

The privacy cost of a target example depends on its gradients along the training trajectory. We formalize the output distributions of DP-SGD at each step to illustrate this. DP-SGD uses Poisson sampling, i.e., each example is sampled independently with probability $p$. Let $\vv=\sum_{i\in \sM}\vg_{i}$ be the sum of the minibatch gradients of $\sD$, where $\sM$ is the set of sampled indices. Consider also a neighboring dataset $\sD'$ that has one datapoint $\vd$ (with gradient $\vg$) added. Because of Poisson sampling, the output is exactly $\vv$ with probability $1-p$ ($\vg$ is not sampled) and is $\vv' = \vv + \vg$ with probability $p$ ($\vg$ is sampled). After adding isotropic Gaussian noise,  the output distributions of two neighboring datasets are
\begin{flalign}
\label{eq:dpsgd_eq1}
&\mathcal{A}(\sD) \sim \mathcal{N}(\vv, \sigma^{2}\mI). \\
\label{eq:dpsgd_eq2}
&\mathcal{A}(\sD') \sim \mathcal{N}(\vv, \sigma^{2}\mI) \;\text{with prob. $1-p$, }\notag\\
&\mathcal{A}(\sD') \sim \mathcal{N}(\vv', \sigma^{2}\mI) \;\text{with prob. $p$}.
\end{flalign}
The RDP of $\vd$ at the current step is the Rényi divergences between Equation~(\ref{eq:dpsgd_eq1}) and~(\ref{eq:dpsgd_eq2}). For a given $\sigma$ and $p$, the divergences are determined by the $L_{2}$ norm of $\vg=\vv'-\vv$. Therefore, a larger gradient would result in a larger privacy cost. If we consider all possible $\theta_{t-1}\in \mathcal{O}_{t-1}$ as required by Definition~\ref{def:definition_dp}, we would have to compute the divergences with the largest possible magnitude of $\vg$. \citet{AbadiCGMMTZ16} use the gradient clipping threshold to compute the privacy cost, which results in a worst-case privacy analysis for every example.

\subsection{Output-specific $(\varepsilon,\delta)$-Differential Privacy}
\label{subsec:output_idp}

We define output-specific individual $(\varepsilon, \delta)$-differential privacy to provide a fine-grained analysis of individual privacy.
It makes the privacy parameter $\varepsilon$ a function of the outputs and the target datapoint.

\begin{definition}
\label{def:definition_output_idp}[Output-specific individual $(\varepsilon,\delta)$-DP]
Fix a datapoint $\vd$ and a set of outcomes $\sA\subset \mathcal{O}$. Let $\sD$ be an arbitrary dataset and $\sD'=\sD\cup \{\vd\}$, an algorithm $\mathcal{A}: \mathcal{D}\rightarrow \mathcal{O}$  satisfies output-specific individual $(\varepsilon(\sA,\vd),\delta)$-DP for $\vd$ at $\sA$ if for any $\sS\subset\sA$  
\[\Pr[\mathcal{A}(\sD)\in \sS]\leq e^{\varepsilon(\sA,\vd)}\Pr[\mathcal{A}(\sD')\in \sS]+\delta,\]
\[\Pr[\mathcal{A}(\sD')\in \sS]\leq e^{\varepsilon(\sA,\vd)}\Pr[\mathcal{A}(\sD)\in \sS]+\delta.\]
\end{definition}

With $\varepsilon$ being a function of the specified set of outcomes $\sA$, we can analyze the individual privacy of models trained by DP-SGD. This is done by fixing the  models from the first $T-1$ steps $(\theta_{1},\ldots,\theta_{T-1})$, which fully specify the gradients along the training. We note that when the value of $\delta$ is larger than the probability of $\sA$, Definition~\ref{def:definition_output_idp} is trivially satisfied for any $\varepsilon$, thus it along does not provide meaningful privacy guarantee. Consequently, this definition should not guide the design of new algorithms in such scenarios. In this work, we adhere to the original implementation of DP-SGD and use Definition~\ref{def:definition_output_idp} to understand the individual privacy of DP-SGD. The resulting privacy parameters provide insights into the model's individual privacy and reflect empirical risks against membership inference attacks \citep{shokri2017membership}, as shown in Appendix~\ref{apdx:disparate_empirical_risk}.



\begin{figure}
\centering
\includegraphics[width=0.4\linewidth]{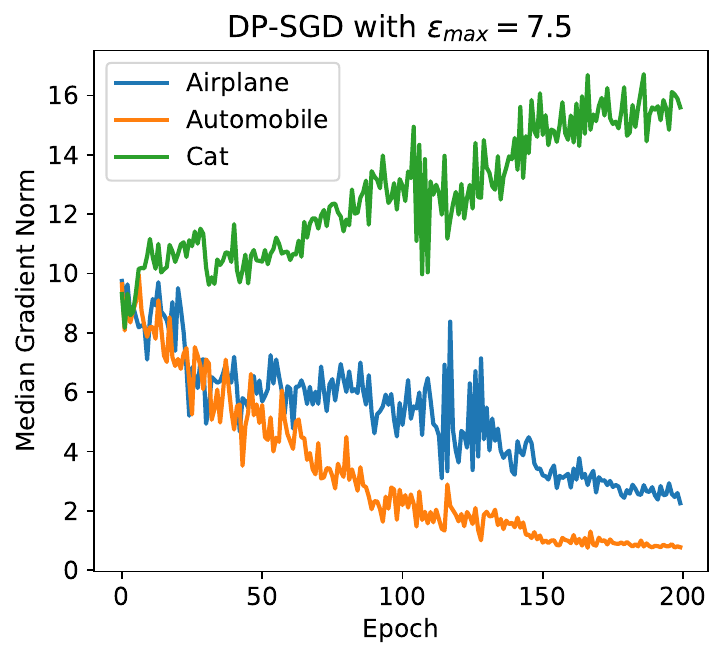}
\caption{Median of gradient norms of different classes when training a ResNet-20 model on CIFAR-10.}

\label{fig:norm_trend}
\end{figure}

\subsection{Gradients of Different Examples Vary Significantly}
\label{subsec:diverse_norms}

At each step of DP-SGD, the privacy cost of an example depends on its gradient at the current step (see Section~\ref{subsec:background} for details).
In this section, we empirically show gradients of different examples vary significantly  to demonstrate that different examples experience very different privacy costs. 
We train a ResNet-20 model with DP-SGD on CIFAR-10. 
The maximum clipping threshold is the median of gradient norms at initialization.
More implementation details are in Section~\ref{sec:exp}.
We plot the median of gradient norms of three different classes in  Figure~\ref{fig:norm_trend}. 
The gradient  norms of different classes  show  significant stratification. 
Such stratification naturally leads to different privacy costs. 
This suggests that it is meaningful to further quantify individual privacy parameters.

\section{Individual Privacy of DP-SGD}
\label{sec:algo}

Algorithm~\ref{alg:main_algo} shows the implementation of DP-SGD\footnote{Our implementation of DP-SGD follows the privacy analysis in \citet{AbadiCGMMTZ16} which uses Poisson sampling. We note that many existing implementations of DP-SGD use shuffle data instead of Poisson sampling to enforce stochasticity. Shuffle data is easier to implement but using it would create a mild discrepancy with the analysis in \citet{AbadiCGMMTZ16}. Formal privacy analysis of shuffle data requires \emph{privacy amplification by shuffling} \citep{koskela2023numerical,wang2023privacy,feldman2023stronger}.} \citep{AbadiCGMMTZ16}. Theorem~\ref{thm:privacyalgo1} gives the individual privacy analysis of DP-SGD.  Algorithm~\ref{alg:individual_privacy} gives the pseudocode  for computing individual privacy parameters. At each step, Algorithm~\ref{alg:individual_privacy} uses the (estimated) individual gradient norms to compute per-step Rényi differential privacy (RDP) for every example. It also updates the individual gradient norms and the accumulated RDP. We introduce two arguments in Algorithm~\ref{alg:individual_privacy} to reduce the computational cost of individual privacy accounting. The first one is the frequency $K$ of computing batch gradient norms  and the second one is whether to round individual gradient norms with a small constant $r$. More discussion on these two arguments could be found in Section~\ref{subsec:dp_sgd_formula} and~\ref{subsec:efficient}.

\begin{restatable}{thm}{privacymainalgo}
\label{thm:privacyalgo1}
 Let $\{\theta_{1},\ldots,\theta_{t-1}\}$ be the observed models at step $t$. Suppose we run  Algorithm~\ref{alg:main_algo} with $K=1$ and without rounding, then  Algorithm~\ref{alg:main_algo} satisfies $(o^{(i)}_{\alpha} + \frac{\log(1/\delta)}{\alpha-1}, \delta)$-output-specific individual DP for the $i_{th}$ example at $\sA=(\theta_{1},\ldots,\theta_{t-1},\mathcal{O}_{t})$, where $o^{(i)}_{\alpha}$ is the accumulated RDP  at order $\alpha$ and $\mathcal{O}_{t}$ is the range of $\mathcal{A}_{t}$.
\end{restatable}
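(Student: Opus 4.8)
The plan is to peel the output-specific guarantee down to a standard Rényi-divergence composition along the \emph{fixed} prefix and then apply the usual RDP-to-$(\varepsilon,\delta)$ conversion. First I would observe that restricting the outcome to $\sA=(\theta_1,\dots,\theta_{t-1},\Theta_t)$ pins down the first $t-1$ iterates, and therefore pins down every per-example gradient $\vg_1,\dots,\vg_t$ encountered along the run, since each $\vg_s$ is evaluated at the already-fixed model $\theta_{s-1}$. Conditioned on $\sA$, the law of the released trajectory is thus that of a \emph{non-adaptive} mechanism: at each step $s\le t$ it adds $\mathcal{N}(0,\sigma^2\mI)$ to a deterministic vector, and by \eqref{eq:dpgd} the two neighboring output laws differ only through the additive shift $\vg_s$ (modulated by the subsampling indicator for $\vd$). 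This is precisely where output-specificity does its work: fixing $\sA$ strips away the adaptivity, so the per-step costs become the realized values rather than a worst case over all reachable models.

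Next I would bound each per-step cost by the (subsampled) Gaussian RDP. For every $s$ the two laws are Gaussians whose means differ by $\vg_s$, so $D^{\leftrightarrow}_\alpha\le\rho_s$ with $\rho_s$ equal to the quantity Algorithm~\ref{alg:individual_privacy} records; with $K=1$ and no rounding this uses the exact norm $\norm{\vg_s}$ at every step. Because the conditioned mechanism is non-adaptive, the standard RDP composition theorem applies verbatim and the trajectory satisfies $(\alpha,\sum_{s\le t}\rho_s)$-RDP, i.e. $(\alpha,o^{(i)}_\alpha)$-RDP, with the sum taken along the fixed prefix. I would then invoke the standard conversion that $(\alpha,\rho)$-RDP implies $(\rho+\frac{\log(1/\delta)}{\alpha-1},\delta)$-DP; taking $\rho=o^{(i)}_\alpha$ produces exactly the claimed parameter. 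Since the Gaussian divergence and the conversion are symmetric in $\sD$ and $\sD'$, both inequalities of Definition~\ref{def:definition_output_idp} follow for every $\sS\subset\sA$.

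The main obstacle will be making the first step fully rigorous — that is, justifying that the composition may legitimately be run with the \emph{realized} per-step costs on $\sA$ instead of their supremum over $\Theta_{s-1}$. Concretely one must verify that the event $\{\mathcal{A}(\sD)\in\sS\}$ for $\sS\subset\sA$ matches conditioning the trajectory on its first $t-1$ coordinates, so that the Rényi chain rule can be applied coordinate-by-coordinate with the gradients frozen, and that the resulting divergence bound still drives the RDP-to-DP conversion once the outcome is confined to $\sA$. The remaining ingredients — replacing the plain Gaussian RDP by the subsampled-Gaussian RDP so that $\rho_s$ matches Algorithm~\ref{alg:individual_privacy}, and checking the conversion in both directions — are routine and can be imported from the existing RDP toolkit.
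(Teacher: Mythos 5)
Your proposal follows essentially the same route as the paper's proof: fix the prefix $(\theta_1,\dots,\theta_{t-1})$ so that the per-step RDP parameters become deterministic constants, apply the standard RDP composition theorem to the resulting non-adaptive chain, and finish with the usual RDP-to-$(\varepsilon,\delta)$ conversion restricted to $\sA$. The obstacle you flag — justifying composition with realized rather than worst-case per-step costs — is exactly the point the paper resolves by conditioning on the fixed trajectory (and is why it stresses that random intermediate privacy parameters would require the machinery of privacy filters), so no substantive difference remains.
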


\begin{hproof}
At step $t$, given the observed models $(\theta_{1},\ldots,\theta_{t-1})$, the composited training algorithm is 
\[\mathcal{\hat A}^{(t)}=(\mathcal{A}_{1}(\sD), \mathcal{A}_{2}(\theta_{1},\sD),\ldots,\mathcal{A}_{t}(\theta_{1},\ldots,\theta_{t-1},\sD)).\] 
 We first use the composition theorem to show the accumulated RDP is the RDP of $\mathcal{\hat A}^{(t)}$. Then we prove the RDP bound on $\mathcal{\hat A}^{(t)}$ gives an   output-specific-$(\varepsilon,\delta)$-DP bound on Algorithm~\ref{alg:main_algo}. We relegate the proof to Appendix~\ref{apdx:proof}.
\end{hproof}

\begin{algorithm}[t]

	\caption{Differentially Private SGD}
	\label{alg:main_algo}

\begin{algorithmic}
    \STATE \textbf{Input:} Clipping threshold  $C$, noise variance $\sigma^{2}$, sampling probability $p$, number of steps $T$.
	
    \medskip

    \STATE Let $\{Z^{(i)}=C\}_{i=1}^{n}$ be the estimates of individual gradient norms, initialized as $C$.
    
    \smallskip

    \STATE Let $\{\vo^{(i)}=0\}_{i=1}^{n}$ be  the accumulated individual RDP.

    \FOR{$t=0\;to\;T-1$} 

        \STATE \textsl{//Individual privacy accounting.}
        \STATE Run Algorithm~\ref{alg:individual_privacy} with $\{Z^{(i)}\}_{i=1}^{n}$ and $\{\vo^{(i)}\}_{i=1}^{n}$.

        \STATE Update $\{Z^{(i)}\}_{i=1}^{n}$ and $\{\vo^{(i)}\}_{i=1}^{n}$ with the results of Algorithm~\ref{alg:individual_privacy}.

        \medskip
        \STATE \textsl{//Run DP-SGD as usual.}
    	\STATE Sample a minibatch of gradients $\{\vg^{(I_{j})}\}_{j=1}^{|I|}$ with probability $p$ , where $I$ is the sampled indices.
         
            \STATE Clip gradients $\bar \vg^{(I_{j})} = clip(\vg^{(I_{j})},C)$.

    	\STATE Update model $\theta_{t} = \theta_{t-1} - \eta(\sum \bar \vg^{(I_{j})} + z)$, where $z\sim \mathcal{N}(0,\sigma^{2}\mI)$.
    
    \ENDFOR

 \end{algorithmic}
\end{algorithm}

\begin{remark}
Algorithm~\ref{alg:individual_privacy} does not change the worst-case privacy guarantee (Definition~\ref{def:definition_dp}) of DP-SGD because it does not modify the update rule.
\end{remark}

In Theorem~\ref{thm:privacyalgo1}, we run Algorithm~\ref{alg:individual_privacy} with $K=1$ and without rounding individual gradient norms. This configuration is computationally expensive for two reasons. Firstly, setting $K=1$ requires  computing batch gradient norms  at each SGD update. Secondly, the number of unique gradient norms is large without rounding. Each unique gradient norm corresponds to a different single-step RDP that needs to be computed numerically. In Section~\ref{subsec:dp_sgd_formula}, we give more details on  the computational challenges. In Section~\ref{subsec:efficient}, we use a larger $K$ and round individual gradient norms to provide estimations of individual privacy. This greatly improves the efficiency of Algorithm~\ref{alg:individual_privacy}.  In Section~\ref{subsec:accurate}, we show the estimates of individual privacy parameters are accurate.

\begin{algorithm}[t]

	\caption{Individual Privacy Accounting for DP-SGD}
	\label{alg:individual_privacy}

\begin{algorithmic}
    \STATE \textbf{Input:}  Individual gradient norms  $\{Z^{(i)}\}_{i=1}^{n}$, accumulated individual RDP $\{\vo^{(i)}\}_{i=1}^{n}$, frequency $K$ of updating $\{Z^{(i)}\}_{i=1}^{n}$, rounding precision $r$, current iteration $t$.

    \smallskip

    \IF{$t \bmod K =0$}
      \STATE //\textsl{Update individual sensitivity.}
        \STATE Compute batch gradient norms $\{\norm{\vg^{(i)}}_{2}\}_{i=1}^{n}$.
        \STATE Update $Z^{(i)}=\min(\norm{\vg^{(i)}}_{2}, C)$.
        \IF{use rounding}
        \STATE //\textsl{Reduce the number of different norms.}
            \STATE Update $\{Z^{(i)}=\argmin_{c\in \sC}(|c-Z^{(i)}|)\}_{i=1}^{n}$, where $\sC=\{r,2r,\ldots,C\}$ contains all possible norms.
        \ENDIF
    \ENDIF
    	
    \medskip
    
    \STATE //\textsl{Compute the current step RDP.}
    \STATE Compute  the Rényi divergences between Equation~\ref{eq:dpsgd_eq1} and~\ref{eq:dpsgd_eq2} numerically with $Z_{i}$, $p$, and $\sigma^{2}$ and store the result in $\boldsymbol \rho^{(i)}$.

      \STATE //\textsl{Update the accumulated  RDP.}
    \STATE $\vo^{(i)} = \vo^{(i)} + \boldsymbol \rho^{(i)}$.

    \RETURN $\{Z^{(i)}\}_{i=1}^{n}$, $\{\vo^{(i)}\}_{i=1}^{n}$
	
 \end{algorithmic}
\end{algorithm}

\begin{figure*}[!ht]
    \centering
  \includegraphics[width=1.0\linewidth]{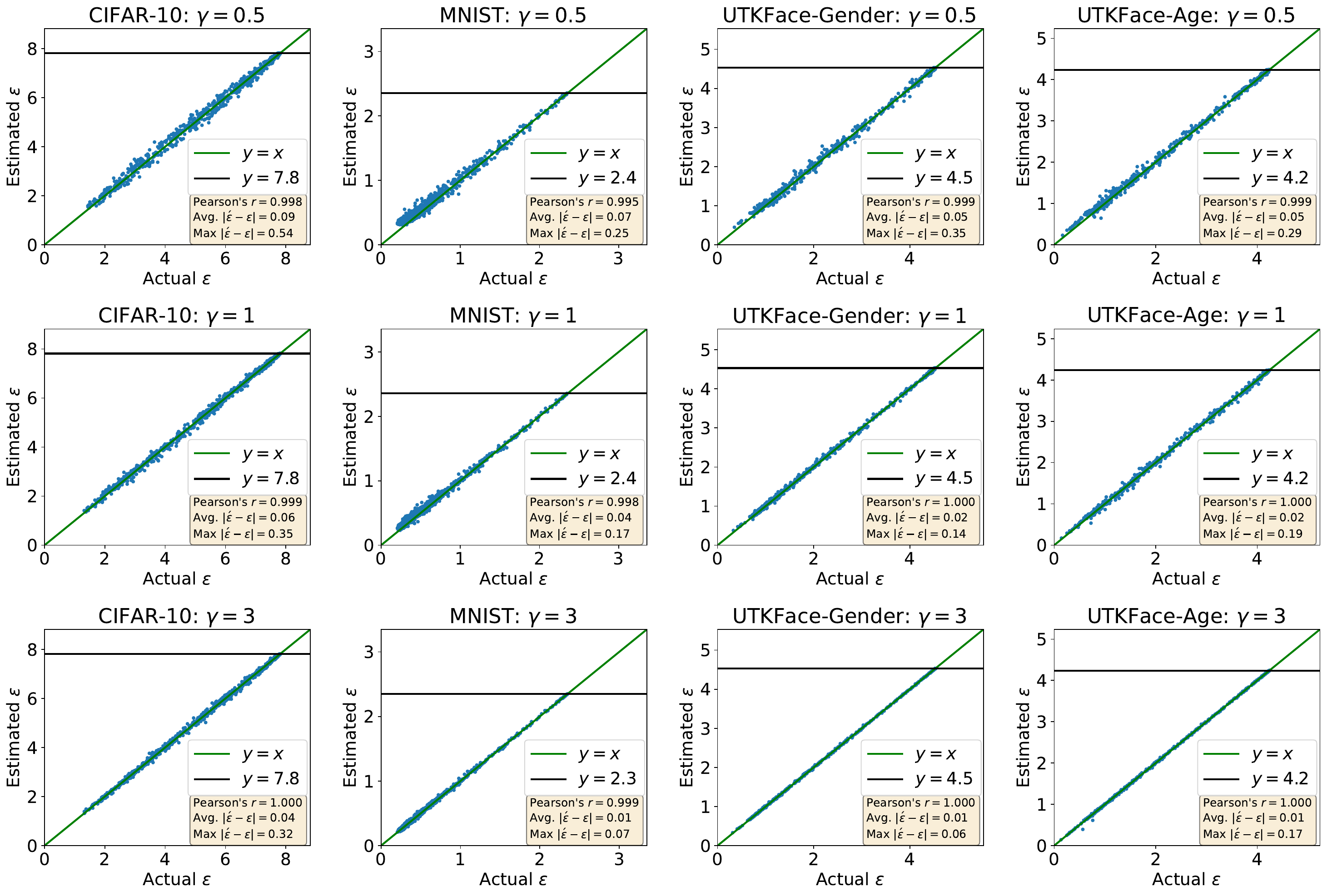}
  \caption{Privacy parameters based on estimations of individual gradient norms ($\varepsilon$) versus those based on exact ones ($\acute{\varepsilon}$). The value of $\gamma$ denotes the number of updates of full gradient norms per epoch. The horizontal line shows the worst-case privacy guarantee.
  }
  \label{fig:eps_error}
\end{figure*}

\subsection{Computational Challenges of Individual Privacy}
\label{subsec:dp_sgd_formula}

The first challenge is that computing exact privacy costs at each step requires batch gradient norms, which is impractical  when running SGD. At each update, we need the gradient of every example to compute the corresponding RDP between Equation~\ref{eq:dpsgd_eq1} and~\ref{eq:dpsgd_eq2}. The worst-case privacy analysis of DP-SGD does not have this problem because it simply assumes all examples have the maximum possible gradient, i.e., the clipping threshold.

The second challenge is that the RDP of every example at each step needs to be computed numerically. It has been shown that numerical computations are necessary to get tight bounds on the Rényi divergences between Equation~\ref{eq:dpsgd_eq1} and~\ref{eq:dpsgd_eq2} \citep{AbadiCGMMTZ16,WangBK19,MironovTZ19,GopiLW21}.
In \cite{AbadiCGMMTZ16}, only one numerical computation is required because all examples are assumed to have the worst-case privacy cost. However, when computing individual privacy parameters, the number of numerical computations is the same as the number of different gradients that could be as large as $n\times T$, where $n$ is the dataset size and $T$ is the number of iterations.

\subsection{Improving the Efficiency of Individual Privacy Accounting}
\label{subsec:efficient}

We only compute the batch gradient norms every $K$ iteration to reduce the computational overhead.  The norms are then used to estimate the privacy costs for the subsequent iterations. We note that providing estimates of privacy costs is inevitable when the computational budget is limited. This is because, by the nature of SGD, one does not have the exact individual gradient norms at every iteration.    In Appendix~\ref{apdx:individual_clip}, we explore another design choice which is to clip $\vg^{(i)}$ with $Z^{(i)}$ in Algorithm~\ref{alg:main_algo}. Although this slightly changes the implementation of DP-SGD, Algorithm~\ref{alg:individual_privacy} would return the exact privacy costs. We run experiments with this design choice and report the results in Appendix~\ref{apdx:individual_clip}. Our observations in the main text still hold in Appendix~\ref{apdx:individual_clip}.

To reduce the number of numerical computations, we round individual gradient norms with a small constant $r$.  Because the maximum clipping threshold $C$ is a constant, then, by the pigeonhole principle, there are at most $\ceil{C/r}$ different values of gradient norms, and hence there are at most $\ceil{C/r}$ different values of RDP between Equation~(\ref{eq:dpsgd_eq1}) and~(\ref{eq:dpsgd_eq2}). Note that $r$ should be small enough to avoid underestimation of RDP. We set $r=0.01C$ throughout this paper. 

We compare the computational costs with/without rounding in Table~\ref{tbl:cost}. We run the numerical method in~\cite{MironovTZ19}  once for every different value of RDP (with the default setup in the Opacus library \citep{opacus}). We run DP-SGD on CIFAR-10 for 200 epochs. The full gradient norms are updated once per epoch. All results in Table~\ref{tbl:cost} use multiprocessing with 5 cores of an AMD EPYC\textsuperscript{™} 7V13 CPU. With rounding, the overhead of computing individual privacy parameters is negligible. The computational cost without rounding is more than 7 hours.

\subsection{Estimates of Individual Privacy Are Accurate}
\label{subsec:accurate}

We run Algorithm~\ref{alg:individual_privacy} with the setup in Section~\ref{subsec:efficient} and compare the results with ground-truth values. To compute the ground-truth individual privacy, we randomly sample 1000 examples before training. During training, we compute the exact privacy costs for the same 1000 examples at every iteration. 

We compute the Pearson correlation coefficient between the estimations and the ground-truth values. We also compute the average and the worst absolute errors. We report results on MNIST, CIFAR-10, and UTKFace. Details about the experiments are in Section~\ref{sec:exp}. We plot the results in Figure~\ref{fig:eps_error}. The estimations of $\varepsilon$ are close to those ground-truth values (Pearson's $r>0.99$) even when we only update the gradient norms every two epochs ($\gamma=0.5$). Updating batch gradient norms more frequently further improves the estimation, though doing so would increase the computational overhead.

It is worth noting that the maximum clipping threshold $C$ affects the computed privacy parameters. Large  $C$ increases the variation of gradient norms (and hence the variation of privacy parameters) but leads to large noise variance while small $C$ suppresses the variation and leads to large gradient bias. Large noise variance and gradient bias are both harmful to learning \citep{chen2020understanding,song2021evading}.  In Appendix~\ref{apdx:vary_clip}, we show the influence of using different  $C$ on both accuracy and privacy.

\begin{table}
\caption{Computational costs of computing individual privacy parameters for CIFAR-10.}
\label{tbl:cost}
\centering
\small
\def\arraystretch{1.15}
\begin{tabular}{c|c|c}
\hline
 & w/ rounding & w/o rounding    \\ \hline
\# of  computations  & $1\times 10^{2}$ &  $1\times 10^{7}$ \\ \hline
Time (in seconds) &  $<3$ & $\sim 2.6\times 10^{4}$   \\ \hline
\end{tabular}
\end{table}

\subsection{What Can We Do with Individual Privacy Parameters?}
\label{subsec:data_dependent_privacy}

Individual privacy parameters depend on the private data and are thus sensitive. They can not be released publicly without care. We describe some approaches to safely make use of individual privacy parameters.

The first approach is to release $\varepsilon_{i}$ to the owner of $\vd_{i}$. This approach does not incur additional privacy cost for two reasons. First, it is safe for $\vd_{i}$ because only the rightful owner sees $\varepsilon_{i}$. Second, releasing $\varepsilon_{i}$ does not increase the privacy cost of any other example $\vd_{j}\neq\vd_{i}$. This is because computing $\varepsilon_{i}$ can be seen as a post-processing of $(\theta_{1},\ldots,\theta_{t-1})$, which is reported in a privacy-preserving manner. We prove the claim in Theorem~\ref{thm:post_processing}.

\begin{restatable}{thm}{post_processing}
\label{thm:post_processing}
Let $\mathcal{A}:\mathcal{D}\rightarrow \mathcal{O}$ be an algorithm that is $(\varepsilon_{j},\delta)$-output-specific individual DP for $\vd_{j}$ at $\sA\subset \mathcal{O}$. Let $f\left(\cdot,\vd_{i}\right):\mathcal{O}\rightarrow \mathcal{R}\times \mathcal{O}$ be a post-processing function that returns the privacy parameter of $\vd_{i}$ ($\neq \vd_{j}$) and the training trajectory. We have $f\left(\cdot,\vd_{i}\right)$ is $(\varepsilon_{j},\delta)$-output-specific DP for $\vd_{j}$ at $\sF\subset \mathcal{R} \times \mathcal{O}$ where $\sF=\{f\left(a,\vd_{i}\right): a\in \sA\}$ is all possible  post-processing results. 
\end{restatable}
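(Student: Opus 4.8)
The statement is the post-processing property of differential privacy, specialized to the output-specific individual notion of Definition~\ref{def:definition_output_idp}. The only genuinely new point, compared with the classical post-processing argument, is that the guarantee we are allowed to invoke holds \emph{only} for event sets contained in $\sA$, so I must ensure the relevant preimage never leaves $\sA$. The plan is to reduce each of the two defining inequalities for $f(\cdot,\vd_i)$ on a subset $\sT\subset\sF$ to the corresponding inequality for $\mathcal{A}$ on its preimage $\sS\subset\sA$.

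First I would fix an arbitrary dataset $\sD$, set $\sD'=\sD\cup\{\vd_j\}$, and observe that $g:=f(\cdot,\vd_i)$ is the deterministic map $g(a)=(\varepsilon_i(a),a)$: it appends the (post-processed) privacy parameter of $\vd_i$ while \emph{retaining the whole trajectory} $a$ in its second coordinate. Crucially, computing $\varepsilon_i(a)$ only requires the reported trajectory $a$ and the fixed point $\vd_i$, which is present in both $\sD$ and $\sD'$; hence the \emph{same} map $g$ acts in both branches and is genuine post-processing of the output of $\mathcal{A}$.

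Next, for any measurable $\sT\subset\sF$ I would introduce the preimage $\sS:=\{a\in\mathcal{O}:g(a)\in\sT\}$ and prove the key containment $\sS\subseteq\sA$. Indeed, if $a\in\sS$ then $(\varepsilon_i(a),a)=g(a)\in\sT\subseteq\sF=\{g(a'):a'\in\sA\}$, so $g(a)=g(a')$ for some $a'\in\sA$; comparing second coordinates forces $a=a'\in\sA$. This is exactly where the hypothesis that $f$ returns the trajectory is used: it makes $g$ injective in the outcome, so the preimage cannot escape $\sA$. This is the main obstacle, and it is resolved by the structure of $f$ rather than by any estimation. Had $f$ returned only $\varepsilon_i$, the preimage could contain outcomes outside $\sA$ on which $\mathcal{A}$ has no output-specific guarantee, and the reduction would break. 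Since $g$ is deterministic, $\Pr[g(\mathcal{A}(\sD))\in\sT]=\Pr[\mathcal{A}(\sD)\in\sS]$ and likewise for $\sD'$.

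Finally, I would apply the output-specific individual $(\varepsilon_j,\delta)$-DP of $\mathcal{A}$ for $\vd_j$ at $\sA$ to the admissible set $\sS\subset\sA$, in both directions, namely $\Pr[\mathcal{A}(\sD)\in\sS]\le e^{\varepsilon_j}\Pr[\mathcal{A}(\sD')\in\sS]+\delta$ together with its symmetric counterpart. Rewriting both sides through the equalities of the previous step yields the two defining inequalities for $f(\cdot,\vd_i)$ on $\sT$, and since $\sD$ and $\sT\subset\sF$ were arbitrary this establishes that $f(\cdot,\vd_i)$ is $(\varepsilon_j,\delta)$-output-specific DP for $\vd_j$ at $\sF$. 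If one wishes to permit a randomized post-processing map, the same reduction goes through after averaging the deterministic inequalities over the internal randomness of $f$ via the usual convexity argument.
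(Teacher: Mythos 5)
Your proof is correct and follows essentially the same route as the paper's: reduce an event $\sT\subset\sF$ to its preimage in $\mathcal{O}$, use the fact that $f$ retains the trajectory (hence is injective) to keep that preimage inside $\sA$, and then invoke the output-specific guarantee of $\mathcal{A}$. You are in fact somewhat more careful than the paper, which compresses the injectivity point into the phrase ``because $f$ is a bijective function'' and writes out only one of the two defining inequalities; your explicit containment argument $\sS\subseteq\sA$ and your treatment of both directions fill in exactly what the paper leaves implicit.
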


\begin{proof}

First note that the construction of $f\left(\cdot,\vd_{i}\right)$ does not increase the privacy cost of $\vd_{j}$ because it is independent of $\vd_{j}$. Without loss of generality, let $\sD, \sD'\in \mathcal{D}$ be the neighboring datasets where  $\sD'={\sD\cup \{\vd_{j}\}}$. 
Let $\sS\subset \sF$ be an arbitrary event and $\sT=\{a\in \sA: f\left(a,\vd_{i}\right)\in \sS\}$. Because $f$ is a bijective function, we have
\begin{flalign}
\Pr\left[f\left(\mathcal{A}(D), \vd_{i}\right)\in \sS\right]&=\Pr\left[\mathcal{A}(D)\in \sT\right] \\
&\leq e^{\varepsilon_{j}}\Pr\left[\mathcal{A}(D')\in \sT\right] + \delta \label{eq:post_process_1}\\
& = e^{\varepsilon_{j}}\Pr\left[f\left(\mathcal{A}(D'), \vd_{i}\right)\in \sS\right] + \delta \label{eq:post_process_2},
\end{flalign}
which completes the proof. Using a bijective post-processing function is necessary for Theorem~\ref{thm:post_processing} to hold. Otherwise, there may be some $o\notin \sA$ and $a\in \sA$ that have the same processed output, which invalids the derivation from Equation~\ref{eq:post_process_1} to Equation~\ref{eq:post_process_2}. 
\end{proof}

The second approach is to privately release aggregate statistics of the population, e.g., the average or quantiles of the $\varepsilon$ values. Recent works have demonstrated such statistics can be published accurately with a minor privacy cost \cite{AndrewTMR21}.  Specifically, we privately release the average and quantiles of the $\varepsilon$ values. We report the results on CIFAR-10 and MNIST.  For releasing the average value of $\varepsilon$, we use the Gaussian Mechanism. For releasing the quantiles, we use 20 steps of batch gradient descent to solve the objective function in \cite{AndrewTMR21} with the default setup. The results are in Table~\ref{tbl:population_eps}. The released statistics are close to the actual values under $(0.1,10^{-5})$-DP. 

\begin{table*}
    \caption{Statistics of individual privacy parameters can be accurately released with minor privacy costs. The average estimation error rate is 1.13\% for MNIST and 0.91\% for CIFAR-10. The value of $\delta$ is $1\times 10^{-5}$.}
\label{tbl:population_eps}
\centering
\small
\begin{tabular}{ p{2.2cm}p{1.3cm}p{1.7cm}p{1.7cm}p{1.3cm}p{1.7cm}p{1.7cm}}
 \hline 
MNIST  &  Average  & 0.1-quantile 	& 0.3-quantile 	& Median & 0.7-quantile & 0.9-quantile \\[0.4ex]
 \hline
Non-private    &  0.686  & 0.236 &  0.318 & 0.431 & 0.697 & 1.682	\\[0.4ex]
$\varepsilon=0.1$	 & 0.681  & 0.238  & 0.317  & 0.436  & 0.708  & 1.647 	  \\[0.4ex]

 \hline
 \hline 
CIFAR-10  &  Average  & 0.1-quantile 	& 0.3-quantile 	& Median & 0.7-quantile & 0.9-quantile \\[0.4ex]
 \hline
Non-private    &   5.942 & 2.713  & 4.892 & 6.730	 & 7.692 &  7.815	\\[0.4ex]
$\varepsilon=0.1$	 &  5.939  &  2.801 & 4.876  &	6.744 & 7.672  & 7.923	  \\[0.4ex]

\hline

\end{tabular} 
\end{table*}

Finally, individual privacy parameters can also serve as a powerful tool for a trusted data curator to improve the model quality.
By analyzing the individual privacy parameters of a dataset, a trusted curator can focus on collecting more data representative of the groups that have higher privacy risks to mitigate the disparity in privacy.

\section{Individual Privacy Parameters on Different Datasets}
\label{sec:exp}

 In Section~\ref{subsec:empirical_idp}, we first show the distribution of individual privacy parameters on four  tasks. Then we study how individual privacy parameters correlate with training loss in Section~\ref{subsec:corr_loss_privacy}.   The experimental setup is as follows.

\textbf{Datasets.} We use two benchmark datasets MNIST ($n=$ 60000) and CIFAR-10 ($n=$ 50000) \citep{LeCunBBH98, Krizhevsky09} as well as the UTKFace dataset ($n\simeq$ 15000) \citep{ZhangSQ17} that contains the face images of four different races (White, $n\simeq$ 7000; Black, $n\simeq$ 3500; Asian, $n\simeq$ 2000; Indian, $n\simeq$ 2800).  We construct two classification tasks on UTKFace: predicting gender, and predicting whether the age is under $30$.\footnote{We acknowledge that predicting  gender and age from images may be problematic. Nonetheless, as facial images have previously been highlighted as a setting where machine learning has disparate accuracy on different groups, we revisit this domain through a related lens. The labels are provided by the dataset curators.}
We slightly modify the dataset between these two tasks by randomly removing a few examples to ensure each race has balanced positive and negative labels.

\textbf{Models and hyperparameters.} For CIFAR-10, we use the WRN16-4 model in \cite{DeBHSB22}, which achieves advanced performance in private setting. We follow the implementation details in \cite{DeBHSB22} expect their data augmentation method to reduce computational cost. For MNIST and UTKFace, we use ResNet20 models with batch normalization layers replaced by group normalization layers. For UTKFace, we initialize the model with weights pre-trained on  ImageNet. 

We set $C=1$ on CIFAR-10, following \cite{DeBHSB22}. For MNIST and UTKFace, we set $C$ as the median of gradient norms at initialization, following the suggestion in \cite{AbadiCGMMTZ16}. The privacy cost of using the median is not taken care of. However, the median of gradient norms could be released accurately with a small privacy cost\footnote{For instance, if we use the algorithm in \citet{AndrewTMR21} to privatize the median gradient norm of the UTKFace-Gender dataset with $(\varepsilon=0.1,\delta=1\times 10^{-5})$. The non-private median is 15.73 and the privatized median is 15.82. }. The batchsize is 4096 for CIFAR-10 and 1024 for MNIST and UTKFace. The training epoch is 300 for CIFAR-10 and 100 for MNIST and UTKFace. For a target maximum $\varepsilon$, we use the package in the Opacus library to find the corresponding noise variance \cite{YousefpourSSTPMNGBZCM21}. We update the batch gradient norms three times per epoch for all experiments in this section (the case of $\gamma=3$ in Figure~\ref{fig:eps_error}).  All experiments are run on single Tesla V100 GPUs with 32G memory. Our source code will be publicly available.

\subsection{Individual Privacy Parameters Vary Significantly}
\label{subsec:empirical_idp}

Figure~\ref{fig:eps_teaser} shows the individual privacy parameters on all datasets. The privacy parameters vary across a large range on all four tasks. On the CIFAR-10 dataset, the maximum $\varepsilon_i$  is 7.8 while  the minimum $\varepsilon_i$ is  1.0. When running gender classification on the UTKFace dataset, the maximum $\varepsilon_i$ is 4.5 while the minimum $\varepsilon_i$ is  only 0.1.

We also observe that, for easier tasks, more examples enjoy stronger privacy guarantees. For example, $\sim$35\% of examples reach the worst-case $\varepsilon$ on CIFAR-10 while only $\sim$3\% do so on MNIST.  This may be because the loss decreases quickly when the task is easy, resulting in gradient norms also decreasing and thus stronger privacy guarantees.

\subsection{Privacy Parameters and Loss Are  Positively Correlated}
\label{subsec:corr_loss_privacy}

 \begin{figure*}
    \centering
  \includegraphics[width=1.0\linewidth]{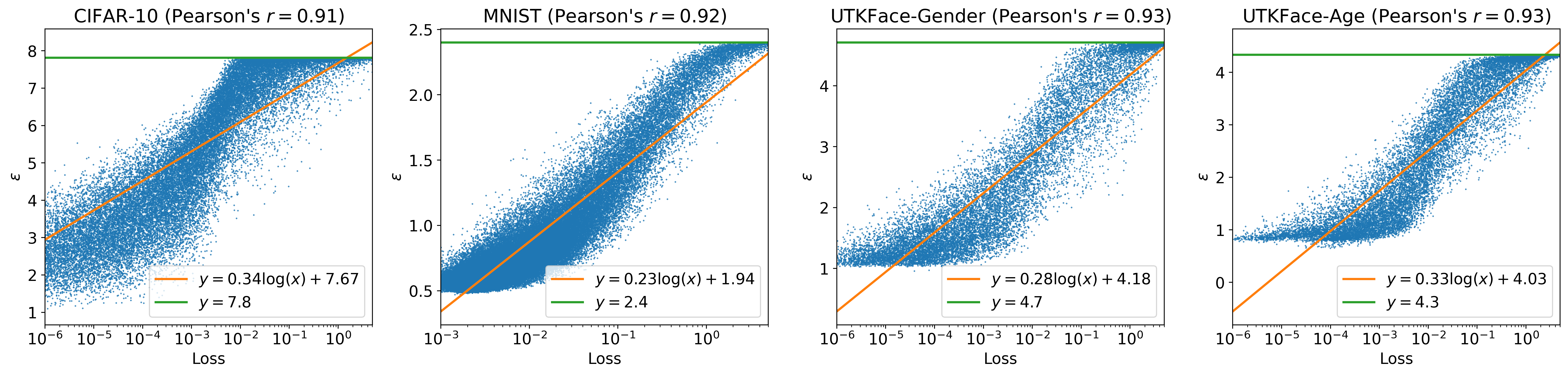}
  \caption{Privacy parameters and final training losses. Each point shows the final training loss and privacy parameter of one example. Pearson's $r$ is computed between privacy parameters and log loss values. }
  \label{fig:eps_loss_scatter}
\end{figure*}

We study how individual privacy parameters correlate with final training loss values. 
The privacy parameter of one example depends on its gradient norms along the training. 
In strongly convex optimization, the loss value of an example is reflected in the norm of its gradient. However, for non-convex deep models, there is no clear relation between the final training loss of one example and its gradient norms. Therefore, we run experiments to reveal the empirical correlation between privacy and utility.

We visualize individual privacy parameters and the final training loss values in Figure~\ref{fig:eps_loss_scatter}. The individual privacy parameters increase with  loss until they reach the maximum $\varepsilon$. To quantify the order of correlation, we further fit the points with one-dimensional logarithmic functions and compute the Pearson correlation coefficients between the privacy parameters and log loss values. The Pearson correlation coefficients are larger than $0.9$ on all datasets, showing a logarithmic correlation between the privacy parameter of a datapoint and its final training loss. In Appendix~\ref{apdx:loss_pri_corr}, we experiment with $C=5$ and $C=10$ on CIFAR-10 to study the correlation between training loss and privacy under various clipping thresholds.   The Pearson correlation coefficients are $0.89$ and $0.9$ for $C=5$ and $C=10$, respectively, suggesting that there is still a positive logarithmic correlation.

\section{Groups Are Simultaneously Underserved in Both Accuracy and Privacy}
\label{sec:fairness}

 \begin{figure}
    \centering
  \includegraphics[width=0.75\linewidth]{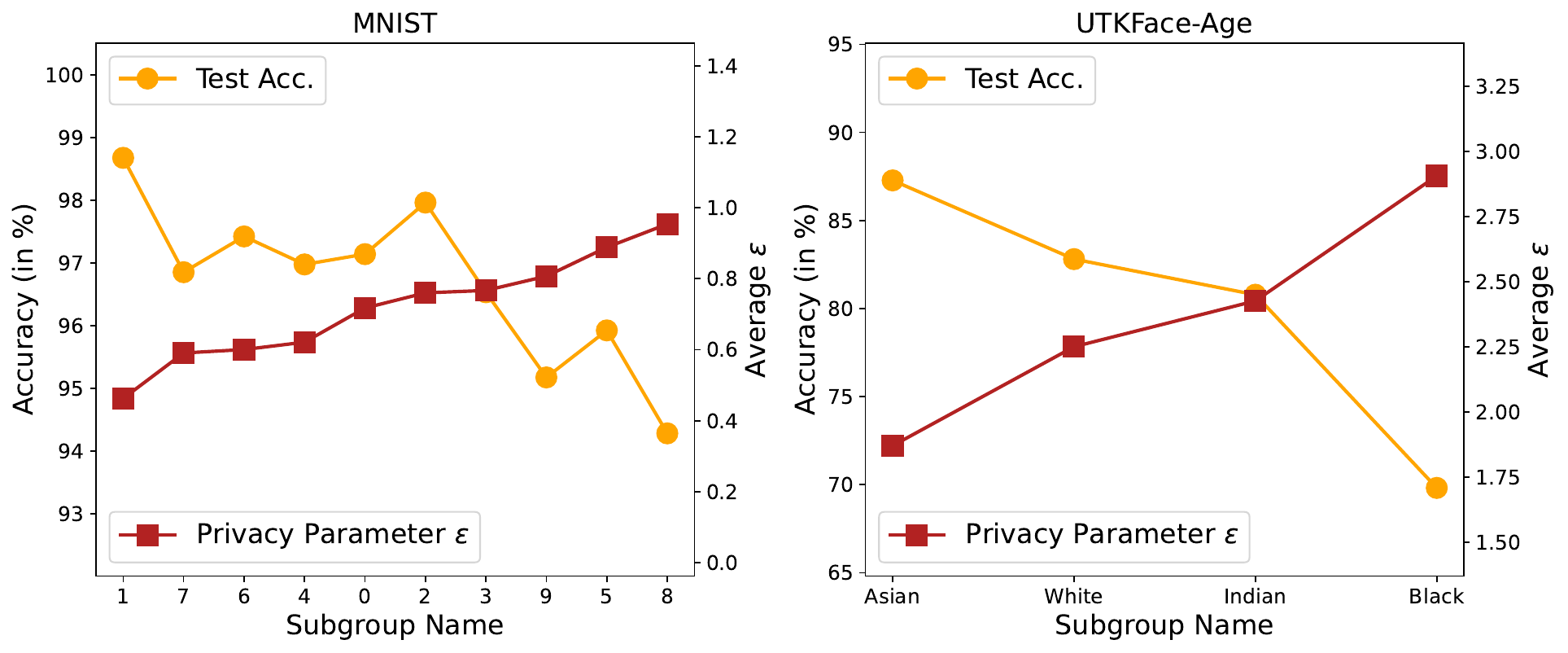}
  \caption{Accuracy and average $\varepsilon$ of different groups on MNIST and UTK-Age. Groups with worse accuracy also  have worse privacy in general. }
  \label{fig:eps_loss_corr_class_mnist}
\end{figure}

It is well-documented that the accuracy of machine learning models may be unfair for different subpopulations ~\citep{BuolamwiniG18,BagdasaryanPS19,SuriyakumarPGG21}.
Our finding demonstrates that this disparity may be simultaneous in terms of both accuracy \emph{and} privacy. 
We empirically verify this by plotting the average $\varepsilon$ and test accuracy of different groups. The experiment setup is the same as Section~\ref{sec:exp}.  For CIFAR-10 and MNIST, the groups are the data from different classes, while for UTKFace, the groups are the data from different races.

We plot the results  in Figure~\ref{fig:eps_loss_corr_class_cifar} and~\ref{fig:eps_loss_corr_class_mnist}.  The groups are sorted based on the average $\varepsilon$. The test accuracy of different groups correlates well with the average $\varepsilon$ values. Groups with worse accuracy do have worse privacy guarantees in general. On CIFAR-10, the average $\varepsilon$ of the `Cat' class (which has the worst test accuracy) is 44.2\% higher  than the average $\varepsilon$ of the `Automobile' class (which has the highest test accuracy). On UTKFace-Gender, the average $\varepsilon$ of the group with the lowest test accuracy (`Asian') is 35.1\% higher than the average $\varepsilon$ of the group with the highest accuracy (`Indian'). Similar observation also holds on other tasks. To the best of our knowledge, our work is the first to reveal this simultaneous disparity. In Appendix~\ref{apdx:disparate_empirical_risk}, we run membership inference attacks to show the disparity in privacy parameters reflects the disparity in empirical privacy risks.

\section{Conclusion}
\label{sec:conclusion}

We define output-specific individual $(\varepsilon,\delta)$-differential privacy to characterize the individual privacy guarantees of models trained by DP-SGD. 
We also design an efficient algorithm to accurately estimate the individual privacy parameters.
We use this new algorithm to examine individual privacy guarantees on several datasets. 
Significantly, we find that groups with worse utility also suffer from worse privacy. 
This new finding reveals the complex while interesting relation among utility, privacy, and fairness. 
It suggests that mitigating the utility fairness under differential privacy is more tricky than doing so in the non-private case. This is because classic methods such as upweighting underserved examples would exacerbate the disparity in privacy. 
We hope that our work sheds new light on this timely topic.

\section*{Broader Impact}
One way to utilize individual privacy parameters is to release them to corresponding users (see Section 3.4 for details). This provides a more accurate, and hence more responsible, privacy report. However, releasing individual privacy parameters may pose new challenges when a machine learning system enables data deletion, also known as machine unlearning \citep{ginart2019making,bourtoule2019machine}. Data deletions may worsen the privacy guarantees of the remaining examples \citep{carlini2022privacy}. Moreover, groups with larger $\varepsilon$ may send deletion requests more frequently than others, which could further deteriorate their privacy guarantees \citep{hashimoto2018fairness}. It's important to keep these considerations in mind when implementing both data deletion and individual privacy accounting in real-world settings.

\section*{Acknowledgments}
The authors express their gratitude to Yu-Xiang Wang and Saeed Mahloujifar for their valuable comments on an earlier version of this paper, and to the anonymous reviewers for their insightful feedback.

\bibliography{files/biblio}
\bibliographystyle{tmlr}

\appendix

\newpage

\onecolumn

\section{Proof of Theorem~\ref{thm:privacyalgo1}}
\label{apdx:proof}
\privacymainalgo*

Here we give the proof of Theorem~\ref{thm:privacyalgo1}. Let $(\mathcal{A}_{1},\ldots,\mathcal{A}_{t-1})$ be a sequence of randomized algorithms and $(\theta_{1},\ldots,\theta_{t-1})$ be some fixed outcomes, we define  
\[\mathcal{\hat A}^{(t)}(\theta_{1},\ldots,\theta_{t-1},\sD)=(\mathcal{A}_{1}(\sD), \mathcal{A}_{2}(\theta_{1},\sD),\ldots,\mathcal{A}_{t}(\theta_{1},\ldots,\theta_{t-1},\sD)).\] Noting that the individual RDP parameters  of each individual mechanism in $\mathcal{\hat A}^{(t)}(\theta_{1},\ldots,\theta_{t-1},\sD)$ are constants. Further let \[\mathcal{A}^{(t)}(\sD)=(\mathcal{A}_{1}(\sD), \mathcal{A}_{2}(\mathcal{A}_{1}(\sD), \sD),\ldots, \mathcal{A}_{t}(\mathcal{A}_{1}(\sD),\ldots,\sD))\] be the adaptive composition. In Lemma~\ref{thm:rdp_to_expost_dp},  we show an RDP bound on    $\mathcal{\hat A}^{(t)}$ gives an output-specific DP bound on  $\mathcal{A}^{(t)}$.  We comment that the individual RDP parameters  of each individual mechanism in $\mathcal{A}^{(t)}(\sD)$ are random variables.   The composition of random privacy parameters  requires additional care because the standard composition theorem requires the privacy parameters to be constants \citep{FeldmanZ21,Lcuyer21,WhitehouseRRW22}.

\begin{lemma}
\label{thm:rdp_to_expost_dp} 
Let $\sA=(\theta_{1}, \ldots,\theta_{t-1},\mathcal{O}_{t})\subset \mathcal{O}^{(t)}$ where $\theta_{1},\ldots,\theta_{t-1}$ are some arbitrary fixed outcomes and $\mathcal{O}^{(t)}$ is the domain of $\mathcal{A}^{(t)}(\sD)$ and $\mathcal{\hat A}^{(t)}(\sD)$. If $\mathcal{\hat A}^{(t)}(\cdot)$ satisfies $o_{\alpha}$ RDP at order $\alpha$, then  $\mathcal{A}^{(t)}(\sD)$ satisfies $(o_{\alpha} + \frac{\log(1/\delta)}{\alpha-1}, \delta)$-output-specific differential privacy at $\sA$.
\end{lemma}

\begin{proof}
For a given outcome $\theta^{(t)}=(\theta_{1}, \theta_{2},\ldots,\theta_{t-1},\theta_{t}) \in \sA$, we have $\mathbb{P}\left[\mathcal{A}^{(t)}(\sD)=\theta^{(t)}\right]=$
\begin{flalign}
\label{eq:proof_privacy_eq1}
&\mathbb{P}\left[\mathcal{A}^{(t-1)}(\sD)=\theta^{(t-1)}\right]\mathbb{P}\left[\mathcal{A}_{t}(\mathcal{A}_{1}(\sD),\ldots,\mathcal{A}_{t-1}(\sD),\sD)=\theta_{t}|\mathcal{A}^{(t-1)}(\sD)=\theta^{(t-1)}\right],\\
&=\mathbb{P}\left[\mathcal{A}^{(t-1)}(\sD)=\theta^{(t-1)}\right]\mathbb{P}\left[\mathcal{A}_{t}(\theta_{1},\ldots,\theta_{t-1},\sD)=\theta_{t}\right],
\end{flalign}

by the product rule of conditional probability. Apply the product rule  recurrently on $\mathbb{P}\left[\mathcal{A}^{(t-1)}(\sD)=\theta^{(t-1)}\right]$, we have $\mathbb{P}\left[\mathcal{A}^{(t)}(\sD)=\theta^{(t)}\right]=$

\begin{flalign}
\label{eq:proof_privacy_eq2}
&\mathbb{P}\left[\mathcal{A}^{(t-2)}(\sD)=\theta^{(t-2)}\right]\mathbb{P}\left[\mathcal{A}_{t-1}(\theta_{1},\ldots,\theta_{t-2},\sD)=\theta_{t-1}\right]\mathbb{P}\left[\mathcal{A}_{t}(\theta_{1},\ldots,\theta_{t-1},\sD)=\theta_{t}\right],\\
&=\mathbb{P}\left[\mathcal{A}_{1}(\sD)=\theta_{1}\right]\mathbb{P}\left[\mathcal{A}_{2}(\theta_{1},\sD)=\theta_{2}\right]\ldots \mathbb{P}\left[\mathcal{A}_{t}(\theta_{1},\ldots,\theta_{t-1},\sD)=\theta_{t}\right],\\
&=\mathbb{P}\left[\mathcal{\hat A}^{(t)}(\theta_{1},\ldots,\theta_{t-1},\sD)=\theta^{(t)}\right].
\end{flalign}

In words, $\mathcal{A}^{(t)}$ and $\mathcal{\hat A}^{(t)}$ are identical in $\sA$. Therefore, $\mathcal{A}^{(t)}$ satisfies $(\varepsilon,\delta)$-DP at any $\sS\subset\sA$ if  $\mathcal{\hat A}^{(t)}$ satisfies $(\varepsilon,\delta)$-DP. Converting the RDP bound on $\mathcal{\hat A}^{(t)}(\sD)$ into a $(\varepsilon,\delta)$-DP bound with Lemma~\ref{lma:rdp_to_dp} then completes the proof.
\begin{lemma} [Conversion from RDP to $(\varepsilon,\delta)$-DP \cite{Mironov17}]
\label{lma:rdp_to_dp}
If $\mathcal{A}$ satisfies $(\alpha,\rho)$-RDP, then $\mathcal{A}$ satisfies $(\rho+\frac{\log(1/\delta)}{\alpha-1}, \delta)$-DP for all $0<\delta<1$.
\end{lemma}
\end{proof}

\section{Individual Privacy Accounting with Individual Clipping}

We set $K>1$ in Algorithm~\ref{alg:individual_privacy} to reduce the computational cost of individual privacy accounting (see Section~\ref{sec:algo} for details). In this case, the computed privacy costs are estimates of the exact ones. In Section~\ref{subsec:accurate} we demonstrate the estimates are accurate. In this section, we give another design choice that slightly modifies the original DP-SGD to give exact privacy accounting. More specifically, we clip the individual gradients with the estimates of gradient norms $\{Z^{(i)}\}$ from Algorithm~\ref{alg:individual_privacy}. We refer to this design choice as \emph{individual clipping}. We give the implementation in Algorithm~\ref{alg:main_algo_withic} and highlight the changes in bold font. We run experiments with individual clipping and report the results in Appendix~\ref{apdx:clip_and_acc} and~\ref{apdx:individual_clip}. The experimental setup is the same as that in Section~\ref{sec:exp}.

\begin{algorithm}[t]

	\caption{Differentially Private SGD with Individual Clipping}
	\label{alg:main_algo_withic}

\begin{algorithmic}
    \STATE{\bfseries Input:} Clipping threshold  $C$, noise variance $\sigma^{2}$, sampling probability $p$, number of steps $T$.
	
    \medskip

    \FOR{$t=0\;to\;T-1$} 
	    
            \STATE \textbf{Call Algorithm~\ref{alg:individual_privacy} for individual privacy accounting and get estimates of individual gradient norms $\{Z^{(i)}\}_{i=1}^{n}$.}
    	
    	\STATE Sample a minibatch of gradients $\{\vg^{(I_{j})}\}_{j=1}^{|I|}$ with probability $p$ , where $I$ is the sampled indices.
         
            \STATE \textbf{Clip gradients $\bar \vg^{(I_{j})} = clip(\vg^{(I_{j})},Z^{(I_{j})})$.}

    	\STATE Update model $\theta_{t} = \theta_{t-1} - \eta(\sum \bar \vg^{(I_{j})} + z)$, where $z\sim \mathcal{N}(0,\sigma^{2}\mI)$.
    
    \ENDFOR

 \end{algorithmic}
\end{algorithm}

\subsection{Individual Clipping Does Not Affect Accuracy}
\label{apdx:clip_and_acc}

Algorithm~\ref{alg:main_algo_withic} uses individual clipping thresholds to ensure the computed privacy parameters are strict privacy guarantees.  If the clipping thresholds are close to the actual gradient norms, then the clipped results are close to those of using a single maximum clipping threshold. However, if the estimations of gradient norms are not accurate, individual thresholds would clip more signal than using a single maximum threshold.

\begin{table}[h]
    \caption{Comparison between the test accuracy of using individual clipping thresholds and that of using a single maximum clipping threshold. The maximum $\varepsilon$ is $7.8$ for CIFAR-10 and 2.4 for MNIST. }
\label{tbl:compare_acc}
\centering
\begin{tabular}{ p{2.25cm}p{2.2cm}p{2.2cm}}
 \hline 
               & CIFAR-10 	& MNIST  \\[0.4ex]
 \hline
Individual    &   	74.0 ($\pm$0.19)   & 		97.17 ($\pm$0.12)   	\\[0.4ex]
Maximum       & 	 74.1 ($\pm$0.24)	 &    97.26 ($\pm$0.11) 		 \\[0.4ex]

 \hline

\end{tabular} 
\end{table}

We compare the accuracy of two different clipping methods in Table~\ref{tbl:compare_acc}. The individual clipping thresholds are updated once per epoch. We repeat the experiment four times with different random seeds.   The results suggest that using individual clipping thresholds in Algorithm~\ref{alg:main_algo} has a negligible effect on accuracy.

\subsection{Individual Clipping Does Not Change The Observations}
\label{apdx:individual_clip}

Here we show running DP-SGD with individual clipping does not change our observations in Section~\ref{sec:exp} and~\ref{sec:fairness}.

\begin{figure*}
    \centering
  \includegraphics[width=1.0\linewidth]{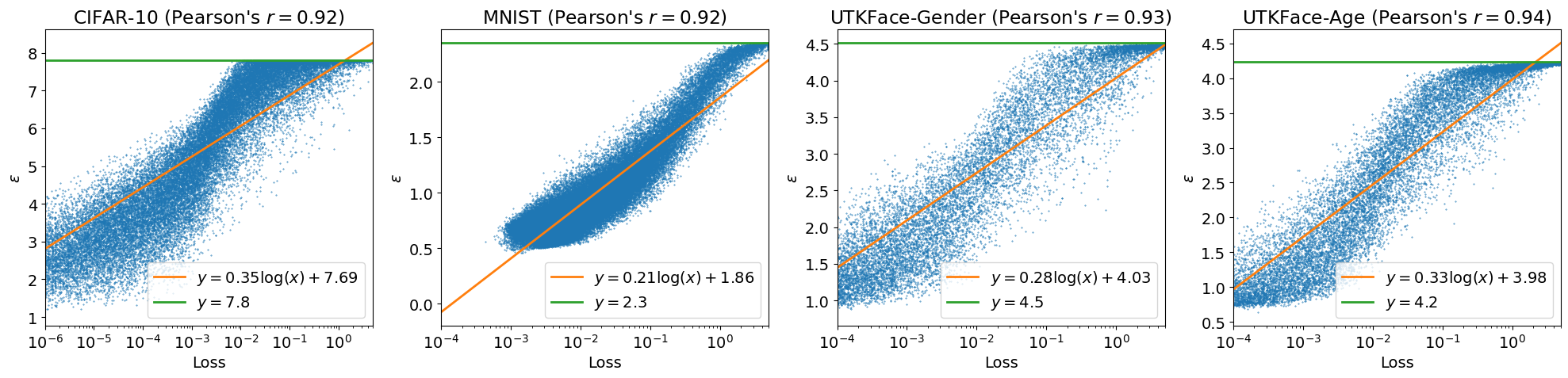}
  \caption{Privacy parameters and final training losses. The experiments are run with individual clipping (Algorithm~\ref{alg:main_algo_withic}). The Pearson correlation coefficient is computed between privacy parameters and log losses.
  }
  \label{fig:eps_loss_scatter_ic}
\end{figure*}

\paragraph{Privacy parameters have a strong correlation with individual training loss.} In Figure~\ref{fig:eps_loss_scatter_ic}, we show privacy parameters computed with individual clipping are still positively correlated with training losses. The Pearson correlation coefficient between privacy parameters and log losses is larger than $0.9$ for all datasets.

\begin{figure}
    \centering
  \includegraphics[width=0.7\linewidth]{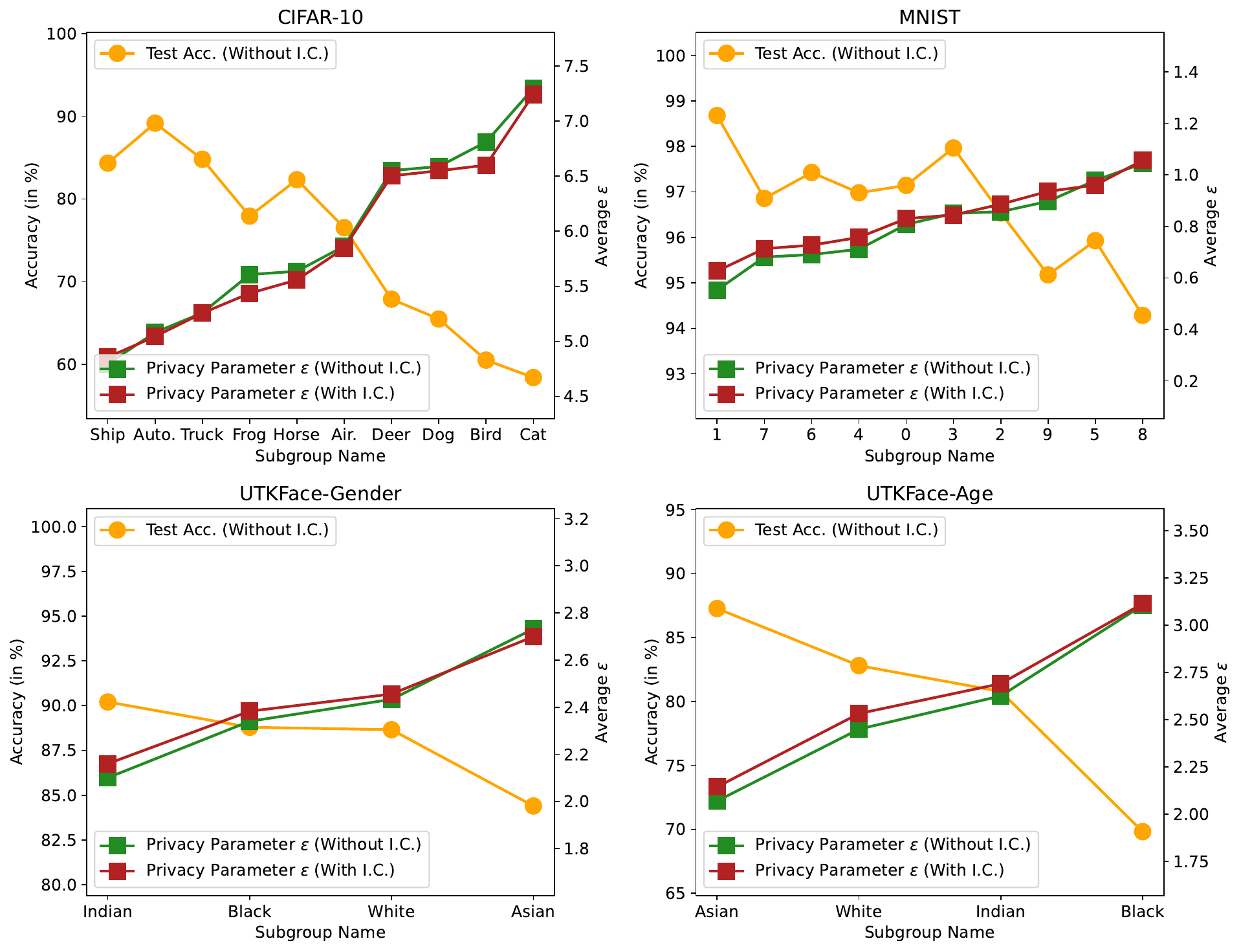}
  \caption{Test accuracy and privacy parameters computed with/without individual clipping (I.C.). Groups with worse test accuracy also have worse privacy in general.
  }
  \label{fig:eps_loss_corr_class_ic}
\end{figure}

\paragraph{Groups are simultaneously underserved in both accuracy and privacy} We show our observation in Section~\ref{sec:fairness}, i.e., low-accuracy groups  have worse privacy parameters, still holds in Figure~\ref{fig:eps_loss_corr_class_ic}. We also  make a direct comparison with privacy parameters computed without individual clipping. We find that privacy parameters computed with individual clipping are close to those computed without individual clipping.  We also find that the order of groups, sorted by the average $\varepsilon$, is exactly the same for both cases.

 \begin{figure}
    \centering
  \includegraphics[width=0.7\linewidth]{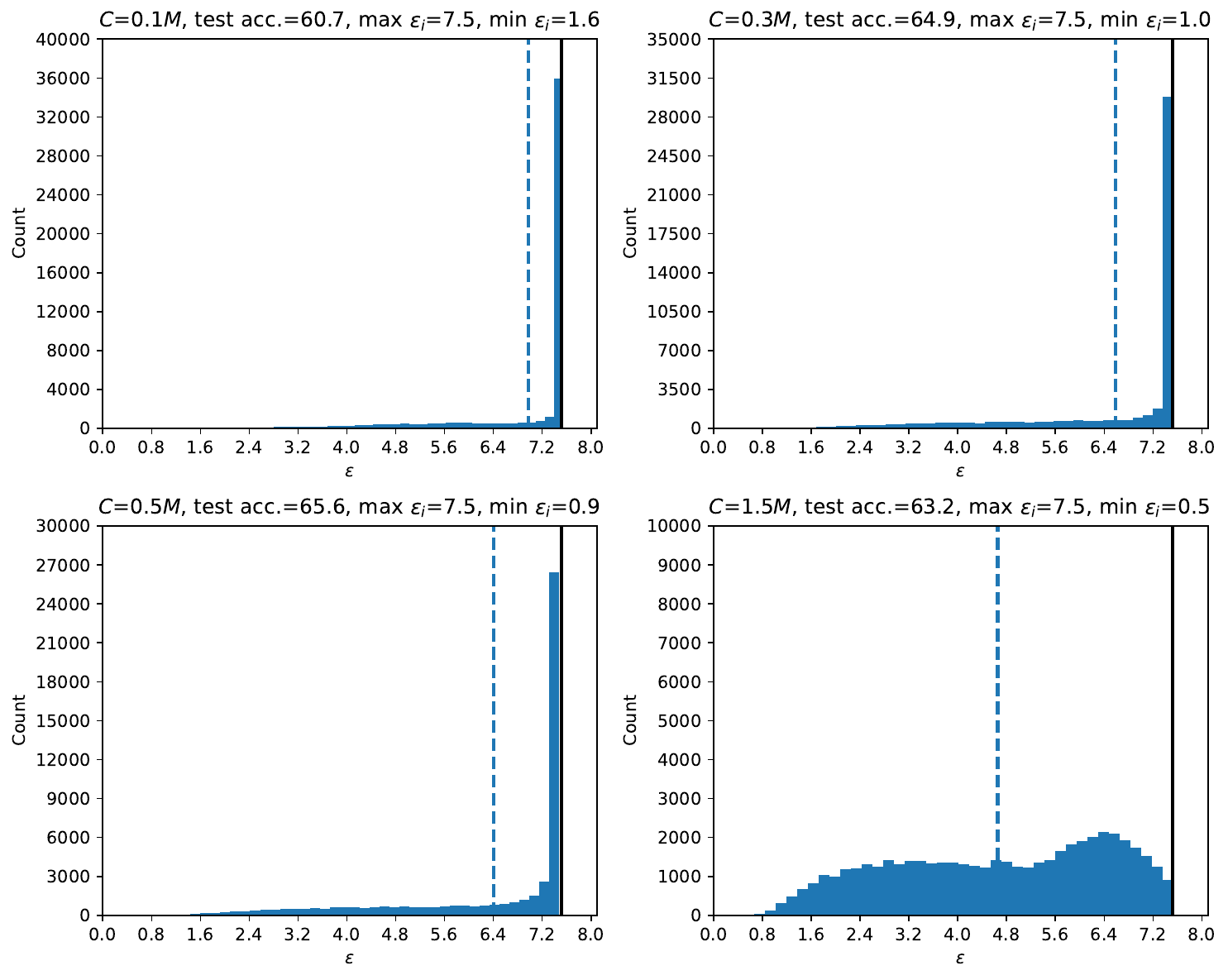}
  \caption{Distributions of individual privacy parameters on CIFAR-10 with different maximum clipping thresholds.  The dashed line indicates the average of privacy parameters. }
  \label{fig:vary_clip_hist}
\end{figure}

 \begin{figure}
    \centering
  \includegraphics[width=0.7\linewidth]{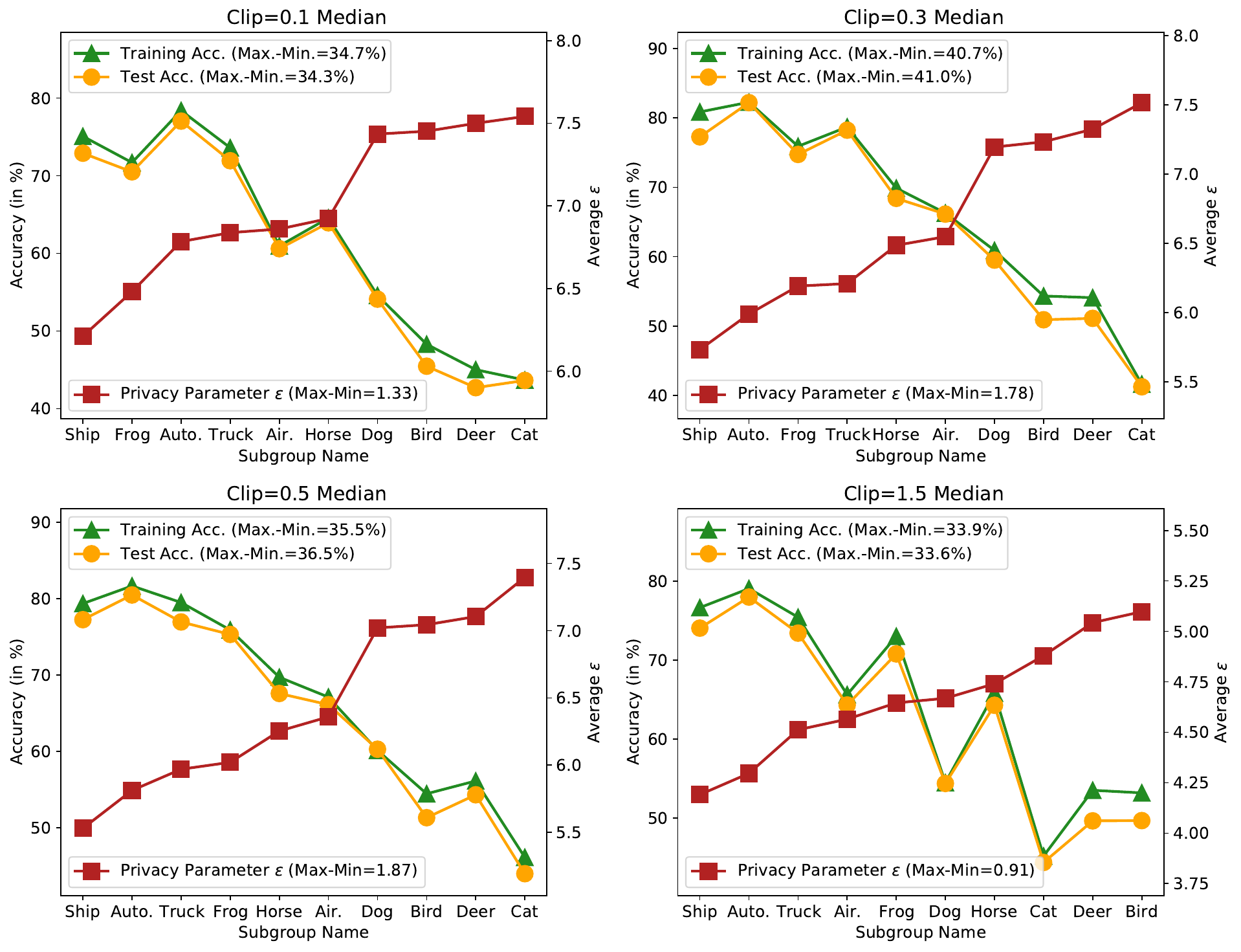}
  \caption{Accuracy and average $\varepsilon$ of different groups on CIFAR-10 with different maximum clipping thresholds.  }
  \label{fig:vary_clip_corr}
\end{figure}

\section{The Influence of Different Maximum Clipping Thresholds}
\label{apdx:vary_clip}

The value of the maximum clipping threshold $C$ would affect individual privacy parameters. A large value of $C$ would increase the stratification in gradient norms but also increase the noise variance for a fixed privacy budget. A small value of $C$ would suppress the stratification but also increase the gradient bias.  Here we run experiments with different values of $C$ on CIFAR-10. We use a small ResNet20 model in \cite{HeZRS16}, which only has $\sim$0.2M parameters, to reduce the computation cost. All batch normalization layers are replaced with group normalization layers. Let $M$ be the median of gradient norms at initialization, we choose $C$ from the list $[0.1M, 0.3M, 0.5M, 1.5M]$.

The histograms of individual privacy parameters are in Figure~\ref{fig:vary_clip_hist}. In terms of accuracy, using clipping thresholds near the median gives better test accuracy. In terms of privacy, using smaller clipping thresholds increases privacy parameters in general. The number of datapoints that reaches the worst privacy decreases with  the value of $C$. When $C=0.1M$, nearly 70\% datapoints reach the worst privacy parameter while only $\sim$2\% datapoints reach the worst parameter when $C=1.5M$. 

The correlation between accuracy and privacy is in Figure~\ref{fig:vary_clip_corr}. The disparity in average $\varepsilon$ is clear for all choices of $C$. Another important observation is that when decreasing $C$, the privacy parameters of underserved groups increase quicker than other groups. When changing $C=1.5M$ to $0.5M$, the average $\varepsilon$ of `Cat' increases from 4.8 to 7.4, almost reaching the worst-case bound. In comparison, the increment in $\varepsilon$ of the `Ship' class is only 1.3 (from 4.2 to 5.5).

\section{Privacy Parameters Reflect Empirical Privacy Risks in Non-Private Learning}
\label{apdx:disparate_empirical_risk}
We run membership inference (MI) attacks to verify whether examples with larger privacy parameters have higher privacy risks in practice. We use a simple loss-threshold  attack that predicts an example is a member if its loss value is smaller than a prespecified threshold \citep{sablayrolles2019white}.  Previous works show that even large privacy parameters are sufficient to defend against such attacks \citep{carlini2019secret,yu2021large}. In order to better observe the difference in privacy risks, we also include models trained without differential privacy as target models. For each data subgroup, we use its whole test set and a random subset of the training set so the numbers of training and test loss values are balanced. We further split the data into two subsets evenly to find the optimal threshold on one and report the success rate on another.

 \begin{figure} [h]
    \centering
  \includegraphics[width=0.9\linewidth]{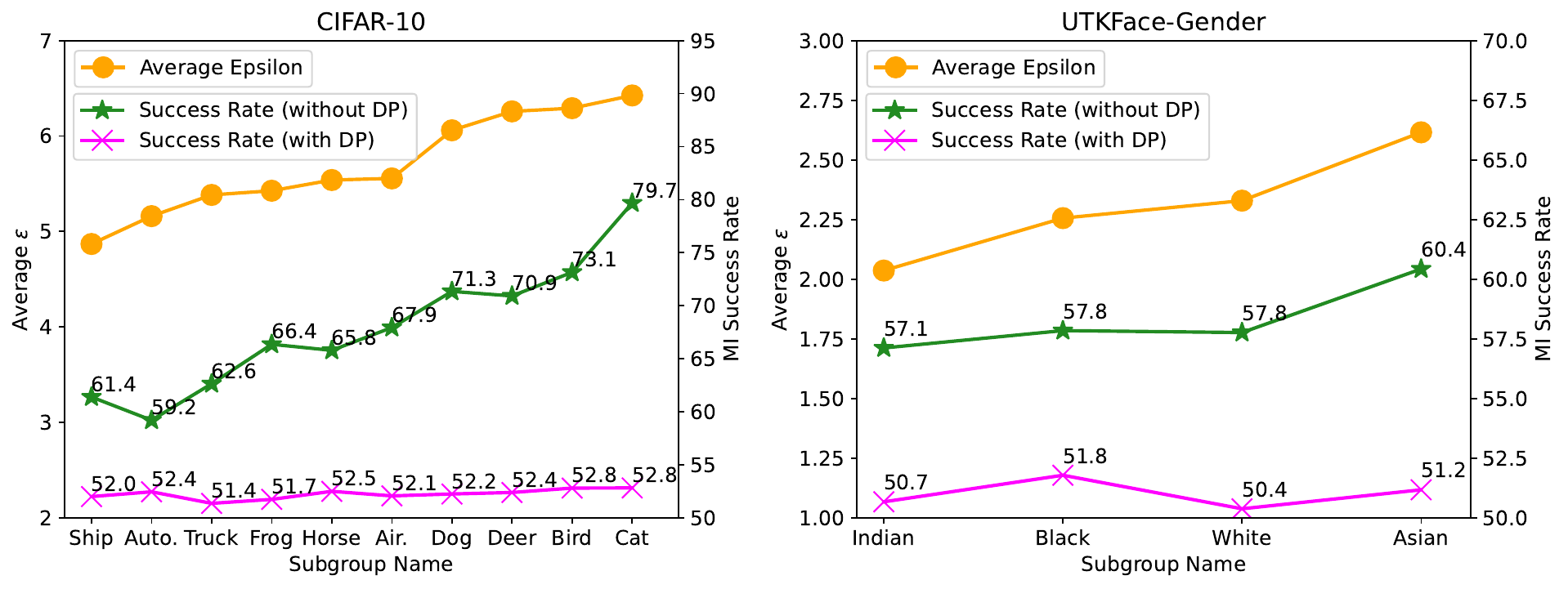}
  \caption{Average  $\epsilon$ and membership inference success rates on different subgroups. }
  \label{fig:eps_mi_corr}
\end{figure}

The results on CIFAR-10 and UTKFace-Gender are in Figure~\ref{fig:eps_mi_corr}.  The subgroups are sorted based on their average $\epsilon$. When the models are trained with DP, all attack success rates are close to random guessing (50\%).  Although the attack we use can not show the disparity in this case, we note that  there are more powerful attacks whose success rates are closer to the lower bound that DP offers \citep{nasr2021adversary}. On the other hand, the difference in privacy risks is clear when models are trained without DP. On CIFAR-10, the MI success rate is 79.7\% on the Cat class (which has the worst average $\epsilon$ when trained with DP) while is only 61.4\% on the Ship class (which has the best average $\epsilon$). These results suggest that the $\epsilon$ values reflect empirical privacy risks which could vary  significantly in different subgroups.

\section{The  Correlation Between Privacy Parameters and Loss Holds under Different Clipping Thresholds}
\label{apdx:loss_pri_corr}

In Section~\ref{subsec:corr_loss_privacy}, we show that there is a positive logarithmic correlation between privacy parameters and training loss. In this section, we run experiments on CIFAR-10 with $C=5$ and $C=10$ to show the correlation still holds under different clipping thresholds. The experiment setup, except the value of $C$, is the same as Section~\ref{subsec:corr_loss_privacy}. The results are in Figure~\ref{fig:loss_pri_corr_apdx}. Although changing the clipping threshold changes the slope and intercept, the logarithmic correlation is still strong.  The Pearson correlation coefficients are $0.89$ and $0.9$ for $C=5$ and $C=10$, respectively.

 \begin{figure} [h]
    \centering
  \includegraphics[width=0.75\linewidth]{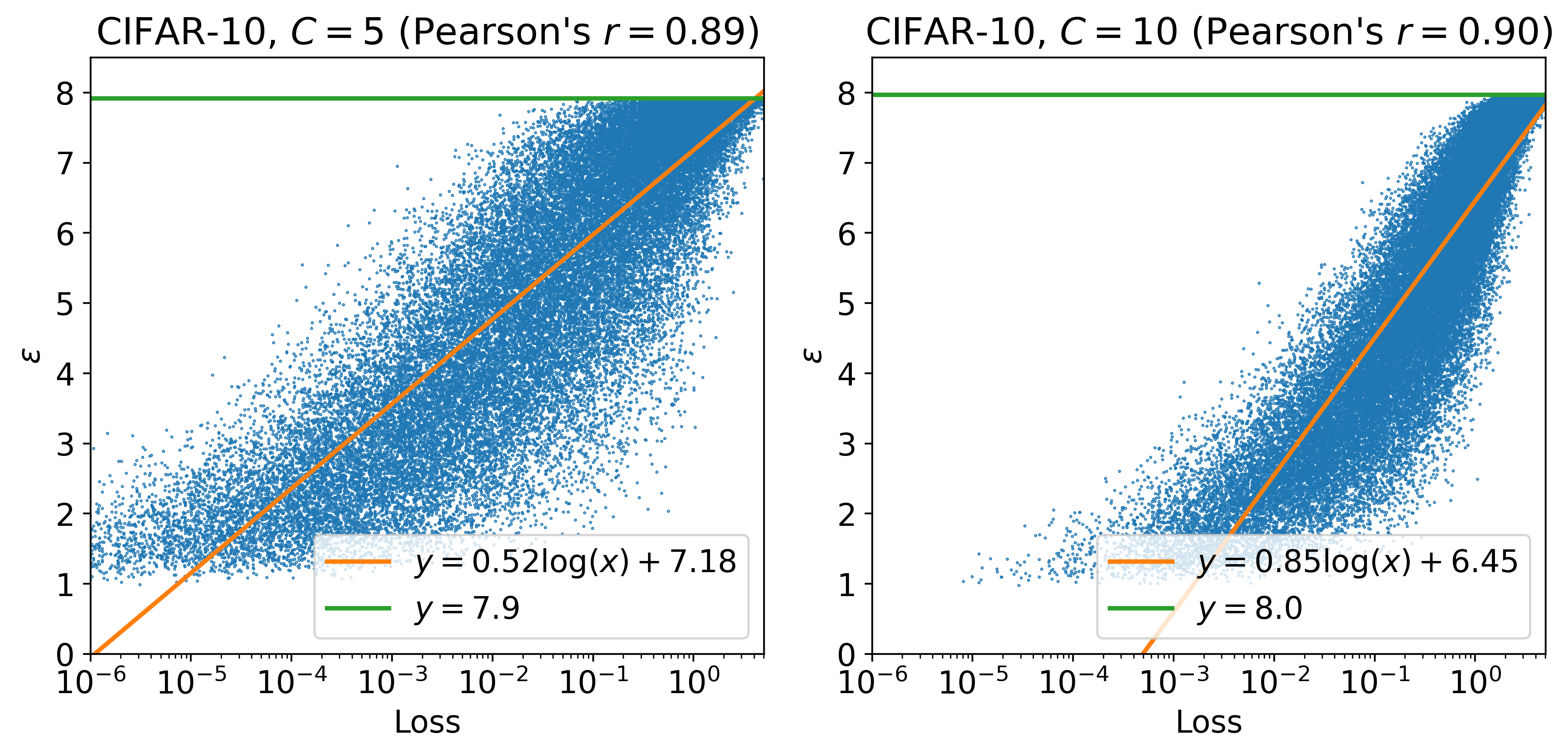}
  \caption{Privacy parameters and final training losses. Each point shows the final training loss and privacy parameter of one example. Pearson's $r$ is computed between privacy parameters and log loss values.}
  \label{fig:loss_pri_corr_apdx}
\end{figure}

\section{Individual Privacy Accounting in More Settings}
\label{apdx:vary_loss_arch}

In this section, we study individual privacy accounting in more experimental settings. The dataset in this section is CIFAR-10. We first study the influence of loss function. We replace the cross-entropy loss with the multi-class hinge loss. Other settings are the same as those in Section~\ref{sec:exp}. The results are in Figure~\ref{fig:margin_loss}. We also study the influence of model architectures. We replace WRN16-4 with the two-layer convolutional neural networks in \citet{papernot2020tempered} and still use the cross-entropy loss. The learning rate is set as $1.0$ and other settings are the same as those in Section~\ref{sec:exp}. The results are in Figure~\ref{fig:small_cnn}.

Our main observations in the main text still hold in the new settings. The Pearson's correlation coefficients between the estimated and actual privacy parameters are larger than $0.99$ in both cases. Examples that are underserved by the model also suffer from higher privacy costs. Although the main observations do not change, we observe some differences in the privacy parameters. When using the two-layer neural network instead of WRN16-4,  the privacy parameters become larger. For instance, the average $\varepsilon$ of Cat increases from 7.3 to 7.7. 

\begin{figure} 
  \centering
  \begin{subfigure}[t]{.4\linewidth}
    \centering\includegraphics[width=1.0\linewidth]{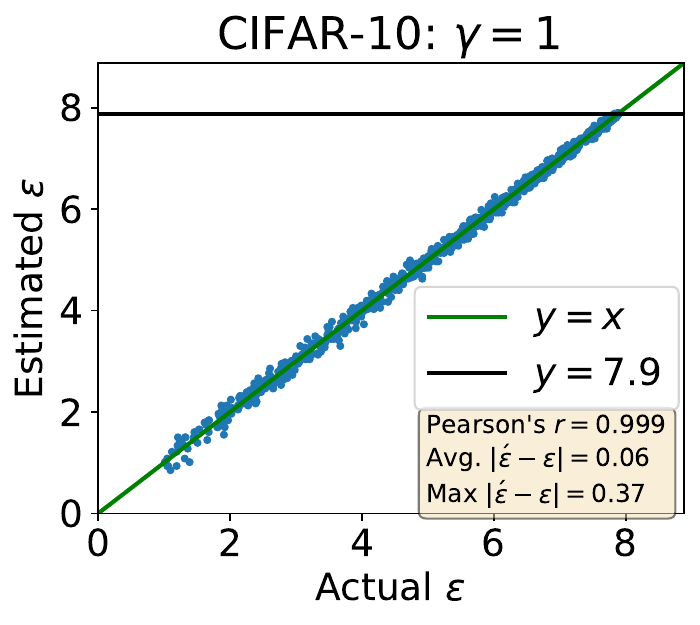}
    \caption{Estimated $\varepsilon$ versus exact $\varepsilon$.}
  \end{subfigure}
  \begin{subfigure}[t]{.4\linewidth}
    \centering\includegraphics[width=1.0\linewidth]{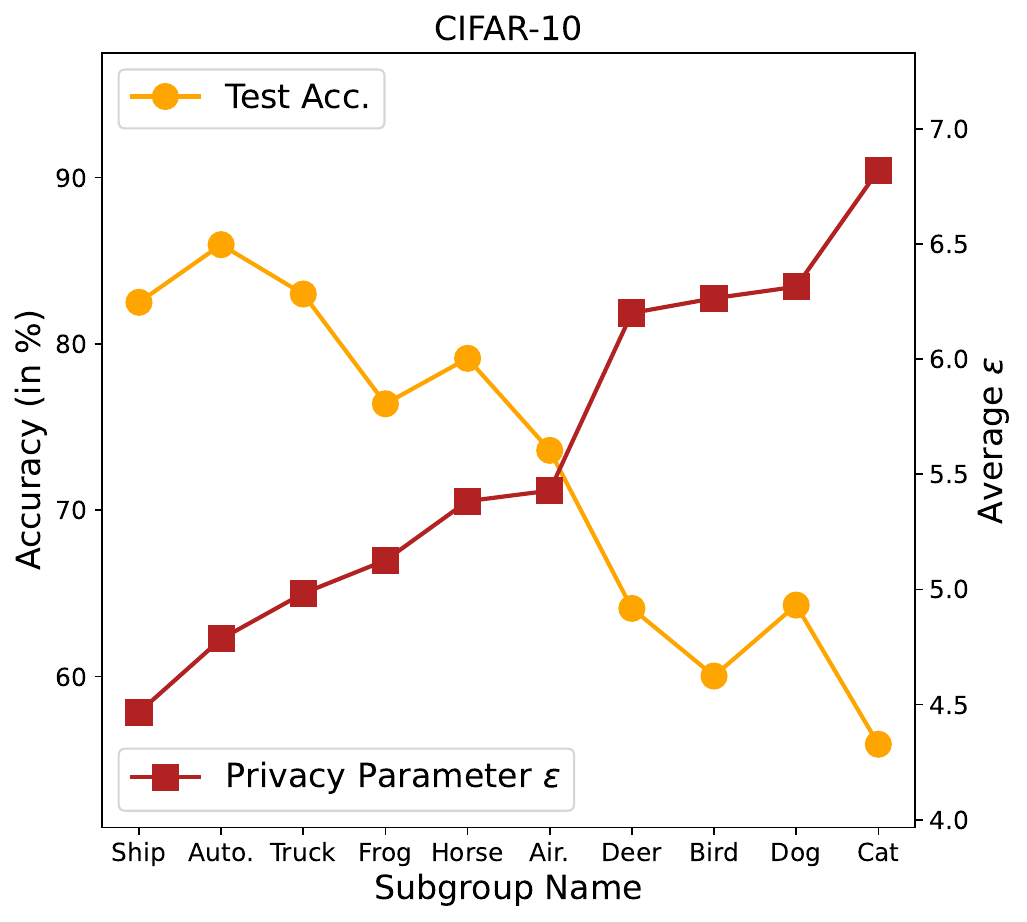}
    \caption{Accuracy and group-wise $\varepsilon$.}
  \end{subfigure}
  \caption{The results of using multi-class hinge loss instead of cross-entropy loss. 
}
  \label{fig:margin_loss}
\end{figure}

\begin{figure} 
  \centering
  \begin{subfigure}[t]{.4\linewidth}
    \centering\includegraphics[width=1.0\linewidth]{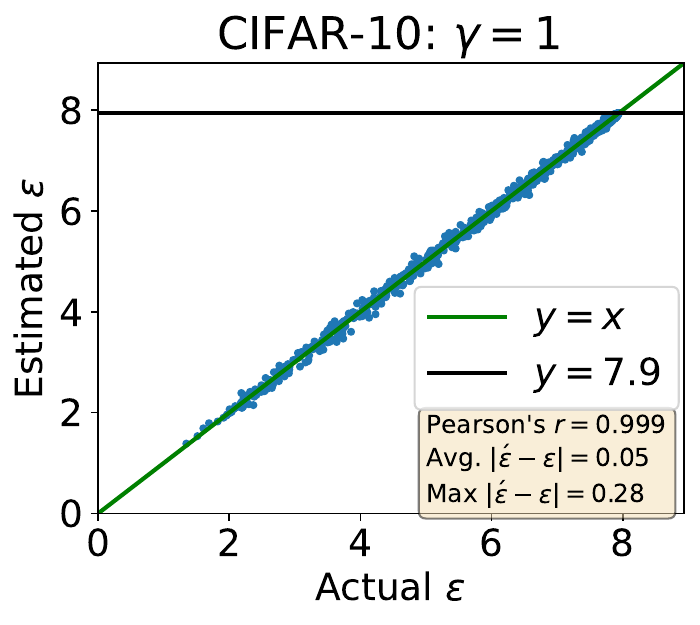}
    \caption{Estimated $\varepsilon$ versus exact $\varepsilon$.}
  \end{subfigure}
  \begin{subfigure}[t]{.4\linewidth}
    \centering\includegraphics[width=1.0\linewidth]{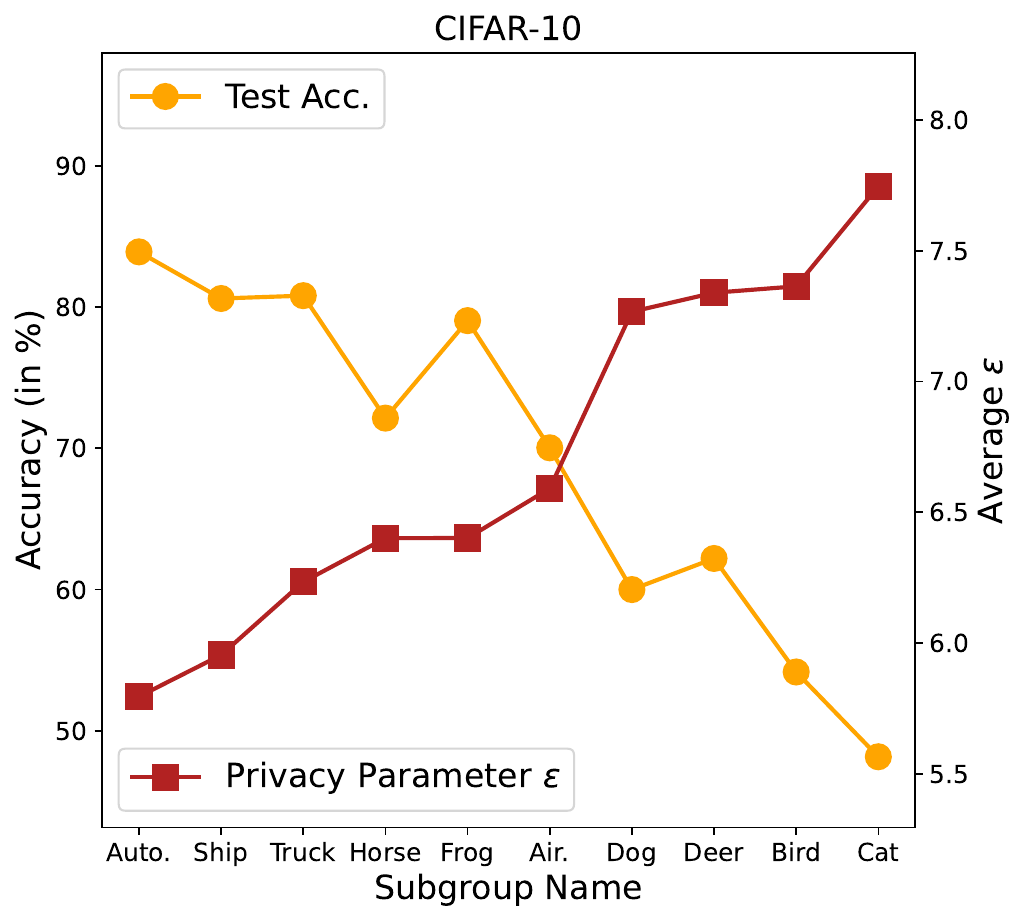}
    \caption{Accuracy and group-wise $\varepsilon$.}
  \end{subfigure}
  \caption{The results of using the small convolutional neural network in \citet{papernot2020tempered}. 
}
  \label{fig:small_cnn}
\end{figure}

\end{document}